\newtheorem{proposition}{Proposition}
\newtheorem{theorem}{Theorem}
\newtheorem{corollary}{Corollary}
\definecolor{darkgreen}{RGB}{34,139,34} 
\title{It Takes a Graph to Know a Graph: Rewiring for Homophily with a Reference Graph}
\author{
Harel Mendelman, Haggai Maron, Ronen Talmon\\
Viterbi Faculty of Electrical and Computer Engineering, Technion\\
\texttt{harel.men@campus.technion.ac.il, haggaimaron@ef.technion.ac.il,}\\ \texttt{ronen@ee.technion.ac.il}
}
\begin{document}
\setlength{\abovedisplayskip}{3pt}  
\setlength{\belowdisplayskip}{3pt}  
\frenchspacing  

\maketitle

\begin{abstract}
Graph Neural Networks (GNNs) excel at analyzing graph-structured data but struggle on heterophilic graphs, where connected nodes often belong to different classes. While this challenge is commonly addressed with specialized GNN architectures, graph rewiring remains an underexplored strategy in this context. We provide theoretical foundations linking edge homophily, GNN embedding smoothness, and node classification performance, motivating the need to enhance homophily. Building on this insight, we introduce a rewiring framework that increases graph homophily using a \emph{reference graph}, with theoretical guarantees on the homophily of the rewired graph. To broaden applicability, we propose a label-driven diffusion approach for constructing a homophilic reference graph from node features and training labels. Through extensive simulations, we analyze how the homophily of both the original and reference graphs influences the rewired graph homophily and downstream GNN performance. We evaluate our method on 11 real-world heterophilic datasets and show that it outperforms existing rewiring techniques and specialized GNNs for heterophilic graphs, achieving improved node classification accuracy while remaining efficient and scalable to large graphs.

\end{abstract}

\section{Introduction}
Graph-structured data naturally arises when modeling complex relationships between entities. Graph Neural Networks (GNNs) have gained prominence as an effective approach to analyzing such data, with applications spanning numerous domains \cite{li2021braingnn, wu2020learning, strokach2020fast}. Most GNNs utilize a message passing mechanism that iteratively updates node representations by aggregating information from neighboring nodes, effectively leveraging both the graph structure and node attributes.

Standard GNNs are designed primarily for homophilic graphs, as they rely on the homophily assumption (i.e., ``birds of a feather flock together'') \cite{mcpherson2001birds}, where connected nodes typically belong to the same class. When applied to heterophilic graphs, where neighboring nodes often belong to different classes, their performance deteriorates \cite{zheng2022graph, zhu2020beyond, yan2022two}. In these cases, the message passing mechanism results in less distinguishable representations, ultimately reducing classification accuracy. To address this challenge, various GNN architectures have been designed to better handle heterophilic graph structures \cite{zhu2020beyond,abu2019mixhop, chien2020adaptive,chen2023exploiting}.

Graph rewiring \cite{attali2024rewiring} is a method that decouples the original graph from the one used for message passing by modifying the graph's connectivity, thereby altering the flow of information and improving node classification performance. It is commonly employed to address two fundamental challenges in GNNs: over-smoothing \cite{li2018deeper} and over-squashing \cite{alon2020bottleneck}. Over-smoothing occurs when excessive message passing causes node representations to become indistinguishable. Conversely, over-squashing arises when too much information is compressed into a small subset of nodes, limiting the model’s ability to capture long-range dependencies and, in practice, reducing its expressive power.

Despite the successful application of rewiring techniques to address over-smoothing and over-squashing in GNNs, their potential for enhancing GNN performance on heterophilic graphs remains largely unexplored. Specifically, leveraging graph rewiring to increase homophily—as an alternative to designing specialized GNN architectures tailored for heterophilic data—represents a promising yet underdeveloped research direction. 
Here, we show that homophily-enhancing rewiring can significantly improve GNN performance in heterophilic settings, offering a complementary approach to existing architectural solutions.

In this paper, we have established both theoretical and empirical connections between edge homophily, the smoothness of learned GNN embeddings on the graph, and GNN performance. Our theoretical analysis shows that in graphs with low homophily, learned node embeddings that allow for an accurate classification cannot be smooth on the graph. 
This creates a conflict with the message passing mechanism of GNNs, which in many cases tends to smooth embeddings along the graph structure.
This result provides strong motivation for improving graph homophily through rewiring. 
To this end, we propose a rewiring framework based on the concept of a \emph{reference graph}, defined as a graph that shares the same node and label sets as the original graph but differs in its edge set. We further show that, under specific conditions related to the homophily of the reference graph relative to the original graph, rewiring the edge set using this framework \emph{guarantees homophily enhancement}, and we support this theoretical result with illustrative simulations demonstrating improved performance following the homophily enhancement.

To implement our framework and systematically control graph homophily, we employ a label-driven diffusion approach \cite{mendelman2025supervised} originated in manifold learning \cite{coifman2006diffusion} to construct a homophilic reference graph by leveraging underutilized information in this context -- node features and training labels. When the resulting reference graph satisfies the conditions of our rewiring framework, it provably increases the homophily of the rewired graph. We validate our method through extensive experiments across a wide range of datasets and GNN architectures, consistently observing performance improvements. Our approach outperforms existing rewiring techniques and specialized GNNs for heterophilic graphs, effectively enhancing homophily and node classification accuracy.

\section{Related work}

\textbf{Graph homophily metrics.} Several types of graph homophily metrics have been proposed to capture different aspects of label correlation in graphs. These include edge homophily \cite{abu2019mixhop}, node homophily \cite{pei2020geom}, class homophily \cite{lim2021new}, neighbor homophily \cite{gong2023neighborhood}, and others. Each of these measures highlights a distinct aspect of homophily, depending on the focus of the analysis. In this work, we specifically focus on edge homophily, the simplest and most commonly used measure, which quantifies the fraction of edges that connect nodes with the same label.

\textbf{Homophily and GNN.}
Recent studies have highlighted the performance degradation of GNNs on non-homophilic graphs \cite{zheng2022graph, zhu2020beyond, yan2022two, zhu2021graph, wang2022powerful}, often demonstrated through empirical evaluations. To address this challenge, various GNN architectures have been designed to better handle heterophilic graph structures \cite{zheng2022graph}. For instance, MixHop \cite{abu2019mixhop} extends standard GNNs by aggregating information from multi-hop neighbors, which may help in dealing with heterophily. Similarly, GPRGNN \cite{chien2020adaptive} introduces an adaptive learning mechanism for Generalized PageRank (GPR) weights, enabling a more effective integration of node features and structural information. H$_2$GCN \cite{zhu2020beyond} further refines message passing by leveraging higher-order connectivity patterns to enhance performance on heterophilic graphs. CAGNN \cite{chen2023exploiting} separates node features for prediction and aggregation, using a shared mixer to adaptively integrate neighbor information.

\textbf{Graph rewiring.}
Graph rewiring improves GNN learning by modifying the edge set \cite{attali2024rewiring}, addressing challenges like over-smoothing and over-squashing \cite{nguyen2023revisiting,topping2021understanding}. Rewiring can also enhance edge homophily, but few works have explored this, with DHGR \cite{bi2024make} being the most notable. Most rewiring algorithms rely on predefined structural criteria for edge addition or deletion, rather than learning a new graph structure. In contrast, the DHGR approach involves learning a similarity measure between nodes and solving an optimization problem. While presented as a rewiring method, it aligns more closely with Graph Structure Learning (GSL) \cite{zhou2023opengsl}, which optimizes graph topology. However, our method offers a simpler and more efficient approach to enhancing homophily, avoiding complex optimization while providing theoretical guarantees on the homophily of the rewired graph.

\section{Notation}

A graph is denoted by $\mathcal{G} = (\mathcal{V}, \mathcal{E})$, where $\mathcal{V}$ is the set of nodes and $\mathcal{E}$ is the set of edges. The node features of a graph are organized in a matrix $\mathbf{X} \in \mathbb{R}^{n \times d}$, where each row, denoted by $x_i$, is the $d$-dimensional feature vector of node $i$. We focus on node prediction tasks, thus node labels are available and denoted by $\mathbf{Y} \in \mathbb{R}^{n \times 1}$, where each element, denoted by $y_i$, is the label of node $i$. For simplicity, we assume scalar labels.

The edge homophily of a graph $\mathcal{G}$ is defined as:
\[
H(\mathcal{G}) = \frac{|\{(u,v) \mid u, v \in \mathcal{V}, y_u = y_v\}|}{|\mathcal{E}|}.
\]

Let $\mathcal{G}^c = (\mathcal{V}, \mathcal{E}^c)$ denote the complementary graph of $\mathcal{G}$, where the edge set $\mathcal{E}^c$ is given by $\mathcal{E}^c = \{(u, v) \mid u, v \in \mathcal{V}, (u, v) \notin \mathcal{E}\}$. Given two graphs $\mathcal{G}_1 = (\mathcal{V}, \mathcal{E}_1)$ and $\mathcal{G}_2 = (\mathcal{V}, \mathcal{E}_2)$, their union graph is denoted by $\mathcal{G}_1 \cup \mathcal{G}_2 = (\mathcal{V}, \mathcal{E}_1 \cup \mathcal{E}_2)$ and their intersection graph by $\mathcal{G}_1 \cap \mathcal{G}_2 = (\mathcal{V}, \mathcal{E}_1 \cap \mathcal{E}_2)$, and their residual graph by $\mathcal{G}_1 \setminus \mathcal{G}_2 = (\mathcal{V}, \mathcal{E}_1 \setminus \mathcal{E}_2)$, containing the edges in $\mathcal{G}_1$ that are not in $\mathcal{G}_2$.

\section{Controlling homophily with a reference graph}\label{sec:increase-homo-theo}
Empirical studies have shown that GNNs perform well on homophilic graphs, whereas heterophilic graphs pose significant challenges for standard GNN architectures \cite{zheng2022graph, zhu2020beyond, yan2022two, zhu2021graph, wang2022powerful}. Before presenting our framework for controlling graph homophily, we further support these empirical observations by making the relationship between homophily and node representation learning formal using the smoothness of the node embedding.

Using the standard measure of signal smoothness on a graph \cite{kalofolias2016learn}, also known as the Dirichlet energy, the smoothness of the node embeddings is given by $\text{tr}(\mathbf{Z}^T \mathcal{L} \mathbf{Z})$, where $\mathbf{Z} \in \mathbb{R}^{n \times d}$ represents the learned node embeddings,  $n$ is the number of nodes, $d$ is the embedding dimension, and $\mathcal{L} = \mathbf{D} - \mathbf{A}$ is the graph Laplacian, with $\mathbf{A}$ being the adjacency matrix and $\mathbf{D}$ the degree matrix. This measure captures how smoothly the embeddings vary across the graph structure, where small values indicate small changes between connected nodes (smoothness) and large values indicate large changes between connected nodes (lack of smoothness).

It is well established that the message passing mechanism in GNNs tends to produce smooth node embeddings (``smoothing is the nature of GNNs'' \cite{chen2020measuring}). That is, GNNs inherently reduce the smoothness term $\text{tr}(\mathbf{Z}^\top \mathcal{L} \mathbf{Z})$, which is not always beneficial and could lead to over-smoothing. In practice, the quality of node embeddings is often evaluated by their separability, as it reflects how well the nodes can be correctly classified, with linear separability serving as a practical criterion. The following result shows that the ability of GNNs to generate such linearly separable, and therefore effective, node embeddings improves as the homophily of the graph increases.  

\begin{theorem}\label{smoothness-theo-1}
Let $\mathcal{G}$ be a graph with linearly separable node embeddings $\mathbf{Z}$. We have:
\[
\text{tr}(\mathbf{Z}^T \mathcal{L} \mathbf{Z}) \geq \frac{\alpha_m |\mathcal{E}|}{2\|\mathbf{W}\|^2} \left(1 - H(\mathcal{G}) \right),
\]
where $\mathbf{W}$ are the parameters of a linear classifier separating $\mathbf{Z}$, $\alpha_m = \min_{(u,v) \in \mathcal{E}} A_{u,v}$, $\mathbf{A}$ is the adjacency matrix, and $|\mathcal{E}|$ is the number of edges in the graph.
\end{theorem}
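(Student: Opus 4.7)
The plan is to expand the Dirichlet energy as a weighted sum of squared embedding differences over edges, keep only the contributions of the heterophilic edges (whose count is $|\mathcal{E}|(1-H(\mathcal{G}))$), and use linear separability to obtain a uniform lower bound on $\|z_u - z_v\|^2$ for every such edge.

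First I would invoke the standard identity
\[
\text{tr}(\mathbf{Z}^T \mathcal{L} \mathbf{Z}) \;=\; \tfrac{1}{2}\sum_{u,v} A_{u,v}\,\|z_u - z_v\|^2,
\]
which follows directly from $\mathcal{L}=\mathbf{D}-\mathbf{A}$ and the fact that $\|z_u-z_v\|^2 = \|z_u\|^2 - 2\langle z_u,z_v\rangle + \|z_v\|^2$. Since every summand is non-negative, I may discard all homophilic-edge terms and lower-bound each remaining $A_{u,v}$ by $\alpha_m$, obtaining $\text{tr}(\mathbf{Z}^T \mathcal{L} \mathbf{Z}) \geq \tfrac{\alpha_m}{2}\sum_{(u,v)\in\mathcal{E},\, y_u\neq y_v} \|z_u - z_v\|^2$.

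Next, I would exploit the separability hypothesis to bound each remaining term from below. For any heterophilic edge $(u,v)$ the linear classifier parametrized by $\mathbf{W}$ must assign $z_u$ and $z_v$ to distinct classes, so after the usual margin normalization one obtains $|\mathbf{W}^T(z_u - z_v)| \geq 1$; Cauchy--Schwarz then gives $\|z_u - z_v\|^2 \geq 1/\|\mathbf{W}\|^2$. Plugging this uniform bound into the previous display and using $|\{(u,v)\in\mathcal{E}: y_u\neq y_v\}| = |\mathcal{E}|(1 - H(\mathcal{G}))$ to count the heterophilic edges produces the stated inequality.

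The main obstacle I anticipate is pinning down the precise form of ``linear separability'' so that the constants line up cleanly. In the multi-class setting, $\mathbf{W}$ is actually a matrix and one should project $z_u-z_v$ onto the column-difference $W_{:,y_u} - W_{:,y_v}$ rather than against a single vector, and then relate the resulting norm back to $\|\mathbf{W}\|$ (spectral or Frobenius). A related piece of bookkeeping is the factor of $\tfrac{1}{2}$ in the stated bound, which has to be tracked through the ordered-versus-unordered edge convention used in both the Dirichlet sum and the homophily definition. These are largely bookkeeping concerns; once they are settled, the rest reduces to one application of the Dirichlet-energy identity, one application of Cauchy--Schwarz, and counting heterophilic edges.
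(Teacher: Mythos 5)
Your proposal is correct and follows essentially the same route as the paper: expand the Dirichlet energy over edges, lower-bound each $A_{u,v}$ by $\alpha_m$, use separability together with Cauchy--Schwarz to force $\|\mathbf{z}_u - \mathbf{z}_v\|^2 \geq 1/\|\mathbf{W}\|^2$ on heterophilic edges, and count those edges as $|\mathcal{E}|\left(1 - H(\mathcal{G})\right)$. The bookkeeping concern you flag about the precise meaning of linear separability is resolved in the paper by adopting the definition $\mathbf{Y} = \mathbf{Z}\mathbf{W}$ with one-hot labels, so that $\|(\mathbf{z}_u - \mathbf{z}_v)\mathbf{W}\|^2 = \|\mathbf{y}_u - \mathbf{y}_v\|^2 \geq I(\mathbf{y}_u \neq \mathbf{y}_v)$ and the constants fall out directly.
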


The proof of this result along with the proofs of all the other results in this section are in Appendix \ref{sec:all-proofs}.

The right-hand side of the inequality provides a lower bound on the smoothness of linearly separable embeddings, which is inversely related to the graph's homophily. Specifically, higher homophily results in a smaller lower bound, making it easier for GNNs to produce embeddings that are both smooth and linearly separable. Conversely, in heterophilic graphs, the increased lower bound raises the risk that the smoothing induced by message passing may compromise linear separability. Thus, Thm. \ref{smoothness-theo-1}  highlights a key insight: \emph{the greater the homophily of the graph, the higher the potential of a GNN to learn linearly separable, and thus more effective, node embeddings}.

\subsection{Homophily-enhancing rewiring framework}
Building on Thm. \ref{smoothness-theo-1}, we propose a framework to enhance the homophily of a graph $\mathcal{G}$ using a reference graph, which we define as $\mathcal{G}_r = (\mathcal{V}, \mathcal{E}_r)$. This reference graph shares the same node set $\mathcal{V}$ as the original graph $\mathcal{G}$, but has a distinct edge set $\mathcal{E}_r$.
Our approach leverages edge addition and deletion, standard practices in graph rewiring, with the key distinction that these operations are guided by the reference graph $\mathcal{G}_r$. Under specific conditions dependent on $\mathcal{G}_r$, we demonstrate that this rewiring process guarantees an improvement in the homophily of the resulting rewired graph $\mathcal{G}^{(k)}$, where $k \in \mathbb{Z}$ indicates the number of added or deleted edges. In Section \ref{sec:method}, we detail how to construct a useful reference graph with the underutilized node features and labels.

Given a reference graph $\mathcal{G}_r = (\mathcal{V}, \mathcal{E}_r)$, the rewired graph $\mathcal{G}^{(k)} = (\mathcal{V}, \mathcal{E}^{(k)})$ is obtained by modifying the edge set $\mathcal{E}$ through the addition or deletion of $k$ edges based on $\mathcal{E}_r$. Specifically, $\mathcal{E}^{(k)}$ is defined as:

\begin{equation}\label{eq:rewired-edge-set}
\mathcal{E}^{(k)} = 
\begin{cases} 
\mathcal{E} \cup S_k, & \text{if } k > 0, \\ 
\mathcal{E} \setminus S_{|k|}, & \text{if } k < 0,
\end{cases}
\end{equation}
where $S_k$ is a random subset of $k$ edges from $\mathcal{E}_r \setminus \mathcal{E}$ (edges in $\mathcal{G}_r$ but not in $\mathcal{G}$), and $S_{|k|}$ is a random subset of $|k|$ edges from $\mathcal{E} \cap \mathcal{E}_r^c$ (edges in both $\mathcal{G}$ and $\mathcal{G}_r^c$). Here, $k > 0$ indicates edge addition, and $k < 0$ edge deletion.

\textbf{Edge addition.} The following proposition and corollary describe the impact of adding $k$ edges selected at random from $\mathcal{E}_r \setminus \mathcal{E}$ on the homophily of the rewired graph depending on the homophily of the reference graph. For any $k > 0$, let $\mathcal{G}^{(k)}$ be the graph obtained by adding $k$ random edges from $\mathcal{E}_r \setminus \mathcal{E}$ to $\mathcal{G}$, and let $\mathcal{G}^{(k+1)}$ be the graph obtained by adding $k+1$ such edges. The expected change in homophily stemming from this addition is described in the following result.
\begin{proposition}\label{homo-theo-add-1}  
If $H(\mathcal{G}_r \setminus \mathcal{G}) > H(\mathcal{G})$, then  
\[
\mathbb{E}[H(\mathcal{G}^{(k+1)})] > \mathbb{E}[H(\mathcal{G}^{(k)})] > H(\mathcal{G}).
\]  
Otherwise,  
\[
\mathbb{E}[H(\mathcal{G}^{(k+1)})] \leq \mathbb{E}[H(\mathcal{G}^{(k)})] \leq H(\mathcal{G}).
\]
\end{proposition}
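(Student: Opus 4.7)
The plan is to write the expected homophily of $\mathcal{G}^{(k)}$ as a weighted average of $H(\mathcal{G})$ and $H(\mathcal{G}_r \setminus \mathcal{G})$, then observe that the weight on the second term is monotone in $k$, which immediately yields both inequalities in each case.

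First, I would fix $k>0$ and count homophilic edges in $\mathcal{G}^{(k)}$. Since $S_k \subseteq \mathcal{E}_r \setminus \mathcal{E}$ is disjoint from $\mathcal{E}$, the homophilic edges in $\mathcal{E} \cup S_k$ split cleanly into those contributed by $\mathcal{E}$ (exactly $H(\mathcal{G})\,|\mathcal{E}|$ of them) and those contributed by $S_k$. Because $S_k$ is drawn uniformly at random from $\mathcal{E}_r \setminus \mathcal{E}$, by linearity of expectation each sampled edge is homophilic with probability equal to the fraction of homophilic edges in $\mathcal{E}_r \setminus \mathcal{E}$, namely $H(\mathcal{G}_r \setminus \mathcal{G})$. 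Hence
\[
\mathbb{E}[H(\mathcal{G}^{(k)})] = \frac{H(\mathcal{G})\,|\mathcal{E}| + k\,H(\mathcal{G}_r \setminus \mathcal{G})}{|\mathcal{E}| + k},
\]
which is a convex combination of $H(\mathcal{G})$ and $H(\mathcal{G}_r \setminus \mathcal{G})$ with weights $\tfrac{|\mathcal{E}|}{|\mathcal{E}|+k}$ and $\tfrac{k}{|\mathcal{E}|+k}$, respectively.

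Next I would note that the second weight is strictly increasing in $k$. Subtracting the formula for $k+1$ from that for $k$ (common denominator $(|\mathcal{E}|+k)(|\mathcal{E}|+k+1)$) produces a factor $|\mathcal{E}|\bigl(H(\mathcal{G}_r \setminus \mathcal{G}) - H(\mathcal{G})\bigr)$ times a positive quantity. This gives both statements in one stroke: when $H(\mathcal{G}_r \setminus \mathcal{G}) > H(\mathcal{G})$ the sequence $\mathbb{E}[H(\mathcal{G}^{(k)})]$ is strictly increasing in $k$ (so both inequalities hold), and when $H(\mathcal{G}_r \setminus \mathcal{G}) \leq H(\mathcal{G})$ it is weakly decreasing, giving the ``otherwise'' case. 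The base comparison $\mathbb{E}[H(\mathcal{G}^{(k)})] \gtrless H(\mathcal{G})$ follows from the convex-combination representation by comparing to the $k=0$ value $H(\mathcal{G})$.

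I do not expect any serious obstacle; the only subtlety is justifying that sampling $S_k$ uniformly without replacement still yields expected homophilic count $k\,H(\mathcal{G}_r \setminus \mathcal{G})$, which I would handle by writing $|\{(u,v) \in S_k : y_u = y_v\}| = \sum_{e \in \mathcal{E}_r \setminus \mathcal{E}} \mathbf{1}[e \in S_k]\,\mathbf{1}[y_u = y_v]$ and using that each edge has marginal inclusion probability $k/|\mathcal{E}_r \setminus \mathcal{E}|$. Everything else is arithmetic on the weighted-average expression.
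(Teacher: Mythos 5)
Your proof is correct and follows essentially the same route as the paper's: both derive the identical closed form $\mathbb{E}[H(\mathcal{G}^{(k)})] = \bigl(|\mathcal{E}|\,H(\mathcal{G}) + k\,H(\mathcal{G}_r \setminus \mathcal{G})\bigr)/\bigl(|\mathcal{E}|+k\bigr)$ and deduce monotonicity in $k$ with sign governed by $H(\mathcal{G}_r \setminus \mathcal{G}) - H(\mathcal{G})$ (the paper via the derivative in $k$, you via the finite difference and the convex-combination weight). If anything, your justification of the expected homophilic count under sampling without replacement, using marginal inclusion probabilities, is slightly more careful than the paper's i.i.d.\ Bernoulli model.
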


\begin{corollary}\label{homo-corollary-add-1}
If $|\mathcal{E}_r|>>|\mathcal{E}|$, then $H(\mathcal{G}_r) \approx H(\mathcal{G}_r \setminus \mathcal{G})$, and the condition in Prop. \ref{homo-theo-add-1} simplifies to $H(\mathcal{G}_r)>H(\mathcal{G})$.
\end{corollary}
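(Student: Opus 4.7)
The plan is to prove the approximation $H(\mathcal{G}_r) \approx H(\mathcal{G}_r \setminus \mathcal{G})$ directly by decomposing the edge set $\mathcal{E}_r$ and then quantifying the error terms using the assumption $|\mathcal{E}_r| \gg |\mathcal{E}|$. Once the approximation is established, substituting it into the condition $H(\mathcal{G}_r \setminus \mathcal{G}) > H(\mathcal{G})$ from Prop.~\ref{homo-theo-add-1} immediately yields the simplified form $H(\mathcal{G}_r) > H(\mathcal{G})$.

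Concretely, I would introduce the shorthand $h(\mathcal{E}')$ for the number of same-label (i.e., homophilic) edges in an edge set $\mathcal{E}'$, so that $H(\mathcal{G}_r) = h(\mathcal{E}_r)/|\mathcal{E}_r|$ and $H(\mathcal{G}_r \setminus \mathcal{G}) = h(\mathcal{E}_r \setminus \mathcal{E})/|\mathcal{E}_r \setminus \mathcal{E}|$. The disjoint decomposition $\mathcal{E}_r = (\mathcal{E}_r \setminus \mathcal{E}) \sqcup (\mathcal{E}_r \cap \mathcal{E})$ then gives the identities $|\mathcal{E}_r| = |\mathcal{E}_r \setminus \mathcal{E}| + |\mathcal{E}_r \cap \mathcal{E}|$ and $h(\mathcal{E}_r) = h(\mathcal{E}_r \setminus \mathcal{E}) + h(\mathcal{E}_r \cap \mathcal{E})$. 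Since $|\mathcal{E}_r \cap \mathcal{E}| \le |\mathcal{E}|$ and $h(\mathcal{E}_r \cap \mathcal{E}) \le |\mathcal{E}_r \cap \mathcal{E}| \le |\mathcal{E}|$, both the numerator and the denominator of $H(\mathcal{G}_r \setminus \mathcal{G})$ differ from those of $H(\mathcal{G}_r)$ by at most $|\mathcal{E}|$. Under the assumption $|\mathcal{E}_r| \gg |\mathcal{E}|$, these corrections are negligible relative to $|\mathcal{E}_r|$ and $h(\mathcal{E}_r)$, and a short quotient-estimation step (writing the difference as $(h(\mathcal{E}_r)|\mathcal{E}_r\setminus\mathcal{E}| - h(\mathcal{E}_r\setminus\mathcal{E})|\mathcal{E}_r|)/(|\mathcal{E}_r||\mathcal{E}_r\setminus\mathcal{E}|)$ and bounding the numerator by $O(|\mathcal{E}|)$) yields $H(\mathcal{G}_r \setminus \mathcal{G}) = H(\mathcal{G}_r) + O(|\mathcal{E}|/|\mathcal{E}_r|)$.

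Having this approximation, the condition $H(\mathcal{G}_r \setminus \mathcal{G}) > H(\mathcal{G})$ that triggers the homophily-increasing regime in Prop.~\ref{homo-theo-add-1} becomes $H(\mathcal{G}_r) > H(\mathcal{G})$ up to an $O(|\mathcal{E}|/|\mathcal{E}_r|)$ error, which is precisely the claim of the corollary. There is no real obstacle here, since the argument is essentially a bookkeeping exercise; the only mild subtlety is to make the meaning of ``$\approx$'' precise by stating it as a bound of order $|\mathcal{E}|/|\mathcal{E}_r|$, so that the reader can see the approximation tightens as the reference graph becomes arbitrarily denser than the original graph.
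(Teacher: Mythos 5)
Your argument is correct, and in fact the paper gives no explicit proof of this corollary at all—it is stated as an immediate consequence of the disjoint decomposition $\mathcal{E}_r = (\mathcal{E}_r \setminus \mathcal{E}) \sqcup (\mathcal{E}_r \cap \mathcal{E})$, which is exactly the route you take. Your quantitative version, bounding $|H(\mathcal{G}_r) - H(\mathcal{G}_r \setminus \mathcal{G})|$ by $|\mathcal{E}|/|\mathcal{E}_r|$, is a genuine sharpening of the paper's informal ``$\approx$''; the only residual caveat (which you already flag) is that the two strict inequalities $H(\mathcal{G}_r \setminus \mathcal{G}) > H(\mathcal{G})$ and $H(\mathcal{G}_r) > H(\mathcal{G})$ can still disagree when $H(\mathcal{G}_r) - H(\mathcal{G})$ is itself of order $|\mathcal{E}|/|\mathcal{E}_r|$, so the ``simplification'' is an asymptotic statement rather than a literal equivalence.
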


Thus, when the condition is satisfied, edge addition based on the reference graph improves homophily in expectation.

\textbf{Edge deletion.} The following proposition is similar to Prop.~\ref{homo-theo-add-1} but for edge deletion, i.e., where $k<0$ and the rewired graph $\mathcal{G}^{(k)}$ is obtained by deleting $|k|$ edges randomly selected from $\mathcal{E} \cap \mathcal{E}_r^c$. 
The expected change in homophily stemming from this deletion is described in the following result.
\begin{proposition}\label{homo-theo-delete-1}
If $H(\mathcal{G} \cap \mathcal{G}_r^c)<H(\mathcal{G})$, then
\[
\mathbb{E}[H(\mathcal{G}^{(k-1)})]>\mathbb{E}[H(\mathcal{G}^{(k)})]>H(\mathcal{G}).
\]
Otherwise,
\[
\mathbb{E}[H(\mathcal{G}^{(k-1)})] \leq \mathbb{E}[H(\mathcal{G}^{(k)})] \leq H(\mathcal{G}).
\]
\end{proposition}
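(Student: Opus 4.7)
The plan is to mirror the structure of Prop.~\ref{homo-theo-add-1} but restricted to the edge set $\mathcal{E}\cap\mathcal{E}_r^c$. Specifically, I will compute $\mathbb{E}[H(\mathcal{G}^{(k)})]$ as an explicit function of $j=|k|$ and then establish its monotonicity in $j$, with the direction of monotonicity controlled by the sign of $H(\mathcal{G})-H(\mathcal{G}\cap\mathcal{G}_r^c)$.

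First I would fix notation: let $m=|\mathcal{E}|$ and $m_H$ the number of homophilic edges of $\mathcal{G}$, so $H(\mathcal{G})=m_H/m$; let $m'=|\mathcal{E}\cap\mathcal{E}_r^c|$ and $m'_H$ its number of homophilic edges, so $H(\mathcal{G}\cap\mathcal{G}_r^c)=m'_H/m'$. Because $S_{|k|}$ is a uniformly random size-$j$ subset of $\mathcal{E}\cap\mathcal{E}_r^c$, each of its edges is selected with marginal probability $j/m'$, and linearity of expectation gives the expected count of deleted homophilic edges as $j\cdot m'_H/m'$. The total number of removed edges is deterministically $j$, so the denominator $|\mathcal{E}^{(k)}|=m-j$ is constant, yielding
\[
\mathbb{E}[H(\mathcal{G}^{(k)})]=\frac{m_H-j\,(m'_H/m')}{m-j}=:f(j).
\]

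Next I would compute $f(j+1)-f(j)$ over a common denominator; the $m_H$-terms collapse and the $h'$-terms telescope, leaving
\[
f(j+1)-f(j)=\frac{m\bigl(H(\mathcal{G})-H(\mathcal{G}\cap\mathcal{G}_r^c)\bigr)}{(m-j)(m-j-1)}.
\]
Since the denominator is positive, the sign of the increment equals $\operatorname{sign}(H(\mathcal{G})-H(\mathcal{G}\cap\mathcal{G}_r^c))$. Combining this with the base case $f(0)=H(\mathcal{G})$ and iterating the step comparison yields the strict chain $f(j+1)>f(j)>H(\mathcal{G})$ in the first case and the reverse (non-strict) chain in the second; translating back via $j\leftrightarrow|k|$ and $j+1\leftrightarrow|k-1|$ recovers the proposition.

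The main obstacle is essentially bookkeeping rather than any conceptual difficulty: keeping signs straight when $k<0$ (so that $k-1$ corresponds to one \emph{more} deletion, $j+1$, not one fewer), and carrying out the algebraic simplification of $f(j+1)-f(j)$ cleanly. The underlying structure is clean because the deletion pool $\mathcal{E}\cap\mathcal{E}_r^c$ has a fixed, known homophily $H(\mathcal{G}\cap\mathcal{G}_r^c)$, the deletion count is deterministic, and uniform sampling makes linearity of expectation immediately applicable — the same features that make the addition proposition go through.
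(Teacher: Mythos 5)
Your proposal is correct and follows essentially the same route as the paper: both express $\mathbb{E}[H(\mathcal{G}^{(k)})]$ as $\frac{m_H - j\,H(\mathcal{G}\cap\mathcal{G}_r^c)}{m-j}$ via linearity of expectation over the uniformly sampled deletion set, and both show its monotonicity in the number of deletions is governed by the sign of $H(\mathcal{G})-H(\mathcal{G}\cap\mathcal{G}_r^c)$. The only (immaterial) difference is that you establish monotonicity through the discrete increment $f(j+1)-f(j)$ while the paper differentiates the same expression with respect to a continuous $k$; your sign bookkeeping for $k<0$ matches the paper's.
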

Thus, when the condition is satisfied, edge deletion based on the reference graph improves homophily in expectation.

\subsection{Validation on real-world datasets}\label{susec:validation}

\begin{figure}[t]
    \centering
    \begin{minipage}{0.59\textwidth}
        \centering
        \begin{subfigure}[t]{0.315\textwidth}
        \centering
        \includegraphics[width=\textwidth]{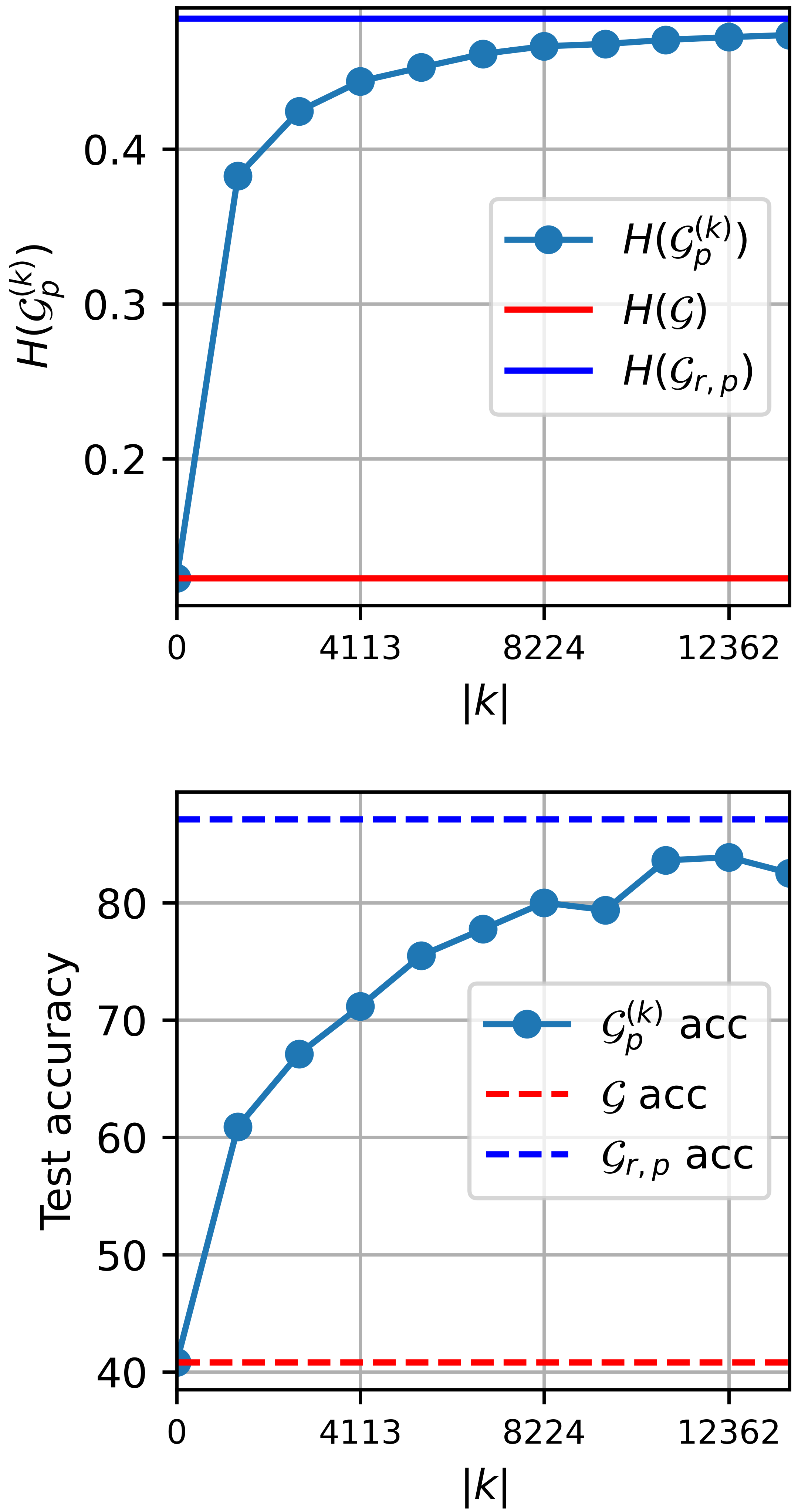}
        \caption{$H(\mathcal{G}_{r,p})=0.48$}
        \label{fig:sub1-wis}
    \end{subfigure}
    \begin{subfigure}[t]{0.32\textwidth}
        \centering
        \includegraphics[width=\textwidth]{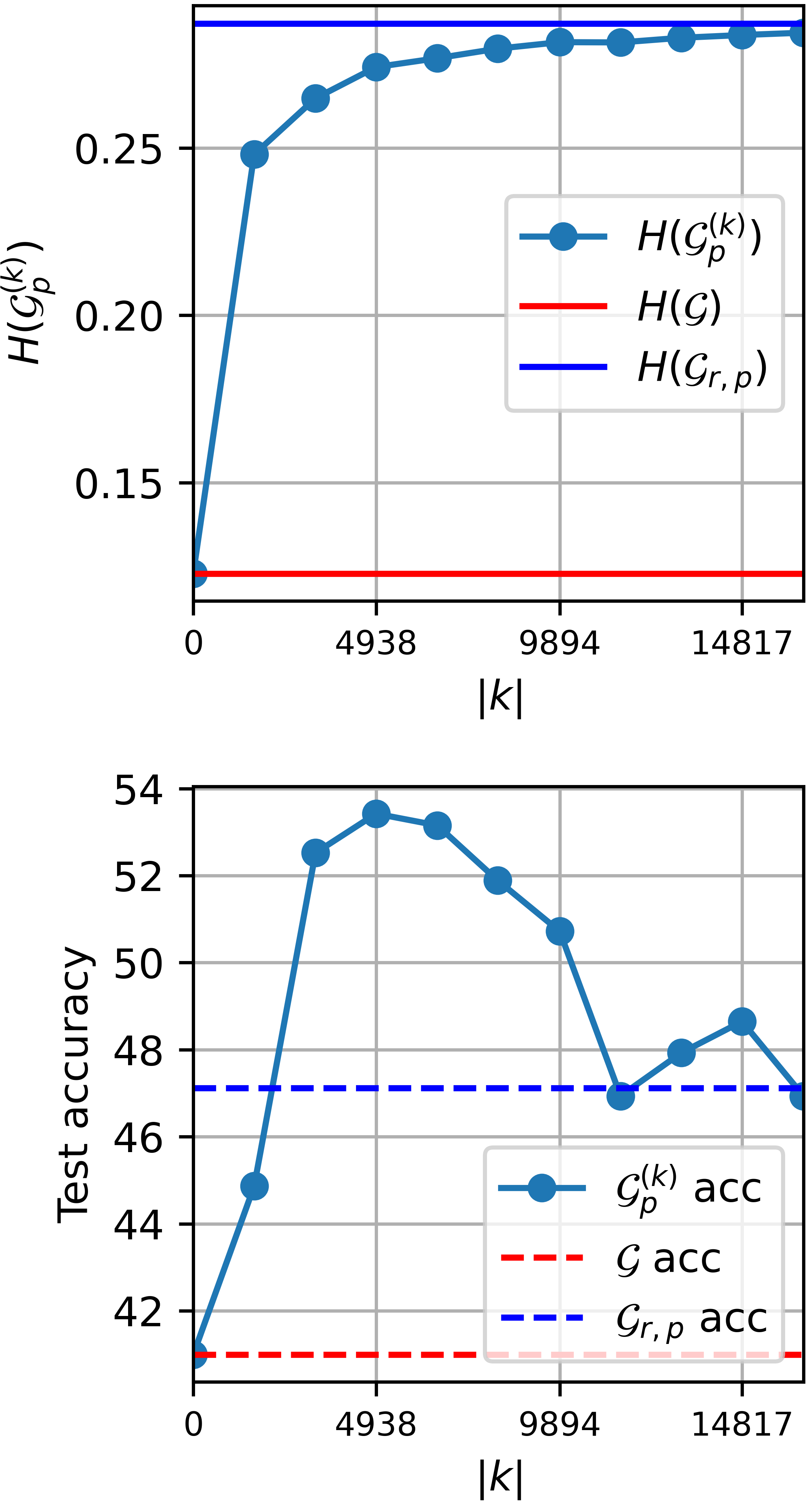}
        \caption{$H(\mathcal{G}_{r,p})=0.29$}
        \label{fig:sub2-wis}
    \end{subfigure}
    \begin{subfigure}[t]{0.325\textwidth}
        \centering
        \includegraphics[width=\textwidth]{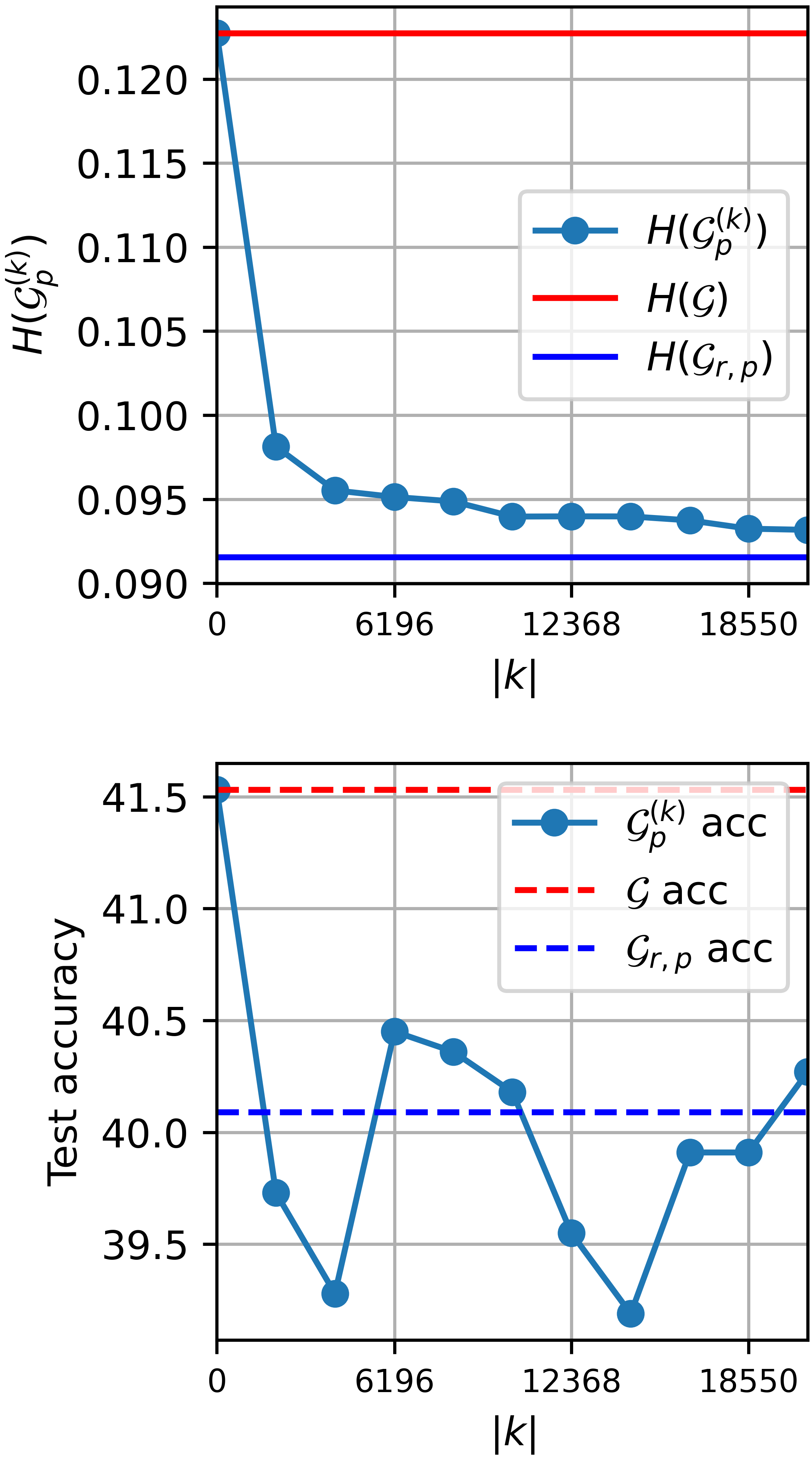}
        \caption{$H(\mathcal{G}_{r,p})=0.09$}
        \label{fig:sub3-wis}
    \end{subfigure}
    \caption{Edge addition on Cornell: First row shows homophily increases with $k$ when the condition holds (blue above red); second row shows its impact on GCN accuracy. $H(\mathcal{G}_{r,p})$ decreases from left to right across the columns.}
    \label{fig:cornell}
    \end{minipage}
    \hfill
    \begin{minipage}{0.39\textwidth}
        \centering
    \begin{subfigure}[t]{0.49\textwidth}
        \centering
        \includegraphics[width=\textwidth]{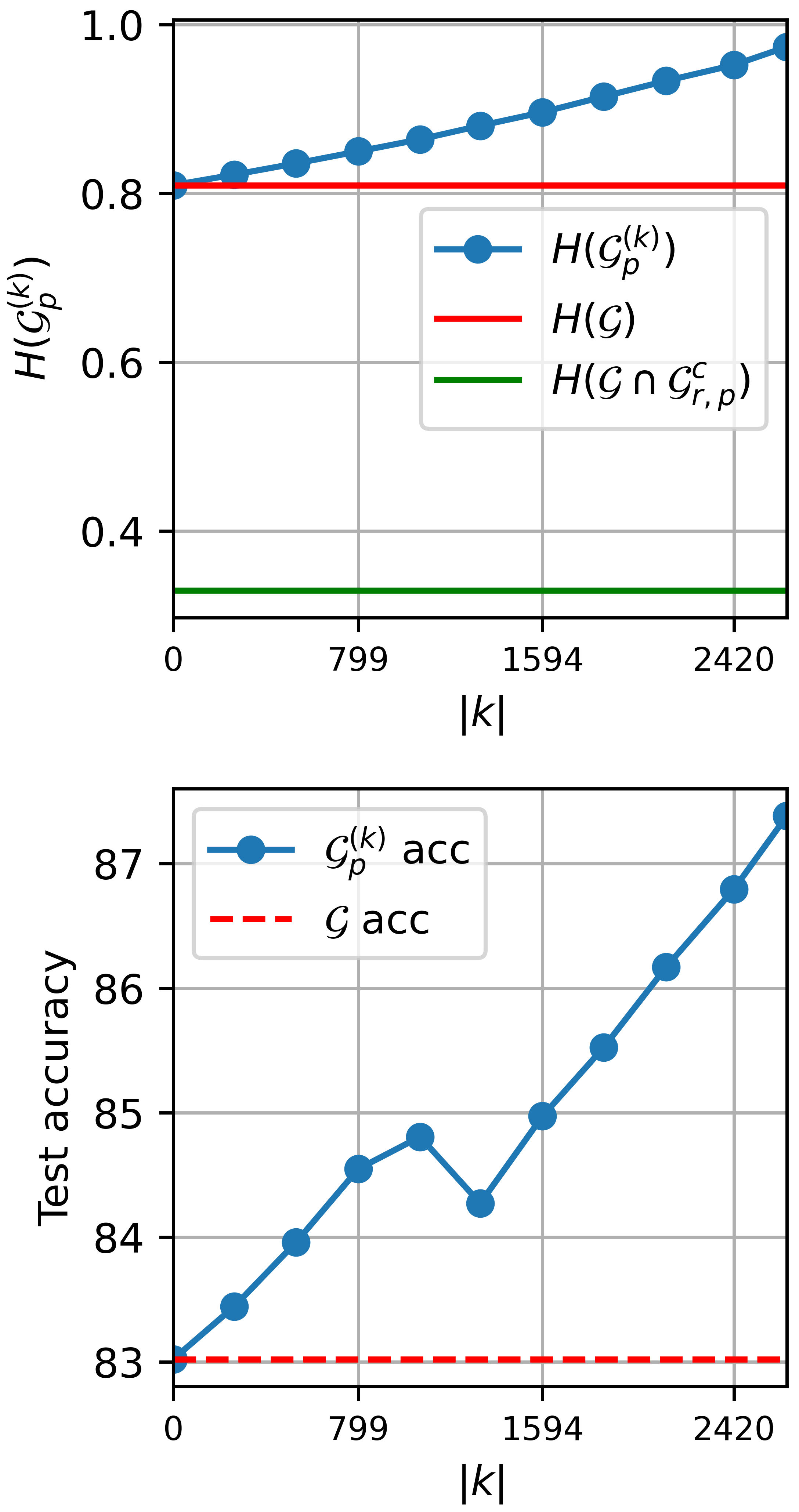}
        \caption{$\mathcal{G}_{r,p}$}
        \label{fig:sub1-cora}
    \end{subfigure}
    \begin{subfigure}[t]{0.49\textwidth}
        \centering
        \includegraphics[width=\textwidth]{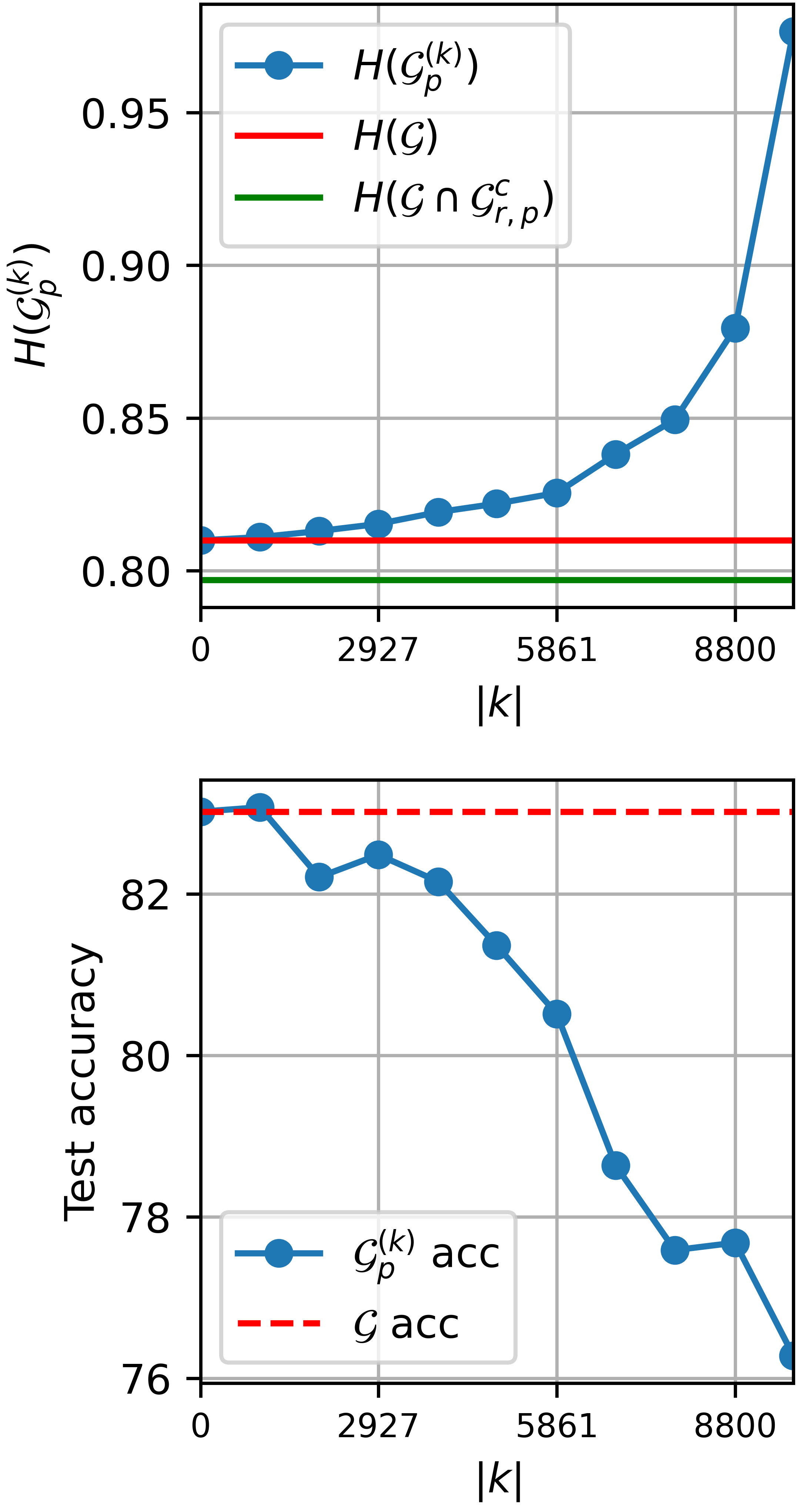}
        \caption{Sparsified $\mathcal{G}_{r,p}$}
        \label{fig:sub2-cora}
    \end{subfigure}
    \caption{Edge deletion on Cora: Homophily increases when the condition is met (green line below red).}
    \label{fig:delete-sim-cora}
    \end{minipage}
\end{figure}

For validation, we consider several real-world datasets.
For each dataset, we consider two graphs. The first, denoted by $\mathcal{G}$, is the original graph obtained from the dataset. The second graph is the ``ideal'' reference graph with maximum homophily, i.e., where nodes of the same class are connected and nodes of different classes are disconnected. Since we do not have access to all the labels and cannot build such an ideal reference graph in practice, we generate a set of reference graphs, denoted by $\mathcal{G}_{r,p}$, using the following scheme. 
To control the homophily of the reference graphs, we use a parameter $p \in (0,1)$ representing the probability of randomly disconnecting or connecting edges of the ideal reference graph. As $p$ increases, the homophily of the modified reference graph $\mathcal{G}_{r,p}$ decreases. In addition, we denote by $\mathcal{G}_{r,p}^c$ the complement of $\mathcal{G}_{r,p}$.
For each value of $p$, we rewire $\mathcal{G}$ by adding ($k > 0$) or deleting ($k < 0$) $|k|$ edges based on the reference graph $\mathcal{G}_{r,p}$ following Eq.~\ref{eq:rewired-edge-set}, resulting in the rewired graph $\mathcal{G}_{p}^{(k)}$. When the condition of Prop. \ref{homo-theo-add-1} is met, or equivalently when $|\mathcal{E}_r| \gg |\mathcal{E}|$ (Cor. \ref{homo-corollary-add-1}), edge addition is expected to improve homophily. In practice, we use Cor. \ref{homo-corollary-add-1}, as its assumption holds under the construction of $\mathcal{G}_{r,p}$. Similarly, when the condition of Prop. \ref{homo-theo-delete-1} is met, edge deletion is expected to improve homophily. To validate this, we measure and present the obtained empirical homophily of $\mathcal{G}_{p}^{(k)}$ and evaluate node classification accuracy using GCN, averaging results over 30 runs for each $p$ and $k$. 

Figure \ref{fig:cornell} shows the effect of rewiring with edge addition on the Cornell dataset, where each column represents rewiring using a different $\mathcal{G}_{r,p}$ with a different homophily (controlled by different values of $p$). The first row displays homophily, $H(\mathcal{G}_p^{(k)})$, as a function of $k$, where the red and blue horizontal dashed lines indicate $H(\mathcal{G})$ and $H(\mathcal{G}_{r,p})$, respectively. The second row shows the impact on GCN node classification accuracy ($\mathcal{G}_p^{(k)}$ acc), where the red and blue dashed lines represent the accuracy obtained by $\mathcal{G}$ and $\mathcal{G}_{r,p}$, respectively.
In Subfigures \ref{fig:sub1-wis} and \ref{fig:sub2-wis}, homophily increases with $k$. This aligns with Cor. \ref{homo-corollary-add-1}, as the condition is met -- the homophily of the reference graph indicated by the blue line is larger than the homophily of the original graph indicated by the red. Conversely, in Subfigure \ref{fig:sub3-wis}, where the condition is not met (blue line below red), homophily decreases, further supporting the corollary.
In the classification accuracy plots, where $H(\mathcal{G}_{r,p})$ is much higher than $H(\mathcal{G})$ (\ref{fig:sub1-wis}), performance improves with increasing $|k|$, but remains lower than training directly on $\mathcal{G}_{r,p}$. 
With $H(\mathcal{G}_{r,p})$ moderately higher than $H(\mathcal{G})$(\ref{fig:sub2-wis}), accuracy improves up to a point, after which over-smoothing degrades performance. Similar trends are observed in other datasets (see Appendix~\ref{sec:more-sim}). When the condition is not met (\ref{fig:sub3-wis}), edge addition reduces both homophily and performance.

Figure \ref{fig:delete-sim-cora} shows the effect of edge deletion on the Cora dataset. The first row displays $H(\mathcal{G}_p^{(k)})$ as a function of $k$, where the red and green horizontal lines represent $H(\mathcal{G})$ and $H(\mathcal{G} \cap \mathcal{G}_{r,p})$, respectively. The second row shows the impact on GCN node classification accuracy. We observe that $H(\mathcal{G}_p^{(k)})$ aligns with Prop. \ref{homo-theo-delete-1}: removing edges increases homophily when the condition is met (green line below red), and decreases it otherwise. However, higher homophily does not always improve GCN performance, as over-squashing can occur. In one scenario (\ref{fig:sub1-cora}), edge deletion based on $\mathcal{G}_{r,p}$ with $p=0.1$ improves performance. In the second scenario (\ref{fig:sub2-cora}), we see that a sparse version of the same $\mathcal{G}_{r,p}$ (with 90\% edges removed) leads to performance degradation due to the excessive removal of same-class edges, raising the risk of over-squashing (despite the improved homophily).

\section{Proposed rewiring method}\label{sec:method}

Given a graph $\mathcal{G} = (\mathcal{V}, \mathcal{E})$, we assume, without loss of generality, that the nodes $\{1, \dots, \overline{n}\}$, where $\overline{n}<n$, are labeled (training set), while the nodes $\{\overline{n}+1, \dots, n\}$ are unlabeled (validation and test sets). Let $\mathbf{X} \in \mathbb{R}^{n \times d}$ denote the node feature matrix, where $x_i \in \mathbb{R}^d$ represents the feature vector of node $i$, and let $\mathbf{\overline{Y}} = \in \mathbb{R}^{\overline{n} \times 1}$ denote the labels of the training nodes, where $\overline{y}_i$ is the scalar label of node $i$. The goal is to construct a homophilic reference graph $\mathcal{G}_r$, which, when applied in the rewiring framework outlined in Section \ref{sec:increase-homo-theo}, improves the homophily of the resulting rewired graph, $H(\mathcal{G}^{(k)})$, in comparison to the original graph, $H(\mathcal{G})$.
If all labels were available, we could construct the ideal reference graph as in Subsection \ref{susec:validation}. But with access only to training labels $\mathbf{\overline{Y}}$, we aim to construct a reference graph whose homophily approaches maximal homophily. Na\"{i}vely building the reference graph solely based on $\mathbf{\overline{Y}}$ -- by connecting each pair of training nodes that share the same class label -- limits generalization and degrades performance when used for rewiring (see Section \ref{subsec:ablation}), necessitating a more robust approach.
Assuming that the feature space offers a meaningful measure of similarity that aligns with the labels, a reference graph based on feature affinities should be homophilic.
Our key idea is to combine node features with the training labels to construct a reference graph that is not only homophilic but also generalizes to the unlabeled nodes.

To implement this idea, we use the label-driven diffusion approach introduced in \cite{mendelman2025supervised} and  propose a diffusion-based method to ``complete'' the missing labels by propagating label information to the unlabeled nodes through a graph constructed from the fully available node features $\mathbf{X}$. This method captures the shared structure between features and labels, resulting in a reference graph with higher homophily than the one constructed solely from $\mathbf{X}$ in most cases, as we empirically demonstrate in Appendix \ref{subsec:homophily-comparison-D-PDP}. 
We claim that this diffusion-based construction results in a homophilic reference graph that is well-suited for our rewiring framework, and we support this claim with extensive empirical results presented in Section~\ref{sec:experiments}.
While our framework supports any $\mathcal{G}_r$ that satisfies the conditions for improving homophily, the construction proposed here is one possible choice among others.

\begin{figure*}[t]
    \centering
    \includegraphics[width=1.0\textwidth]{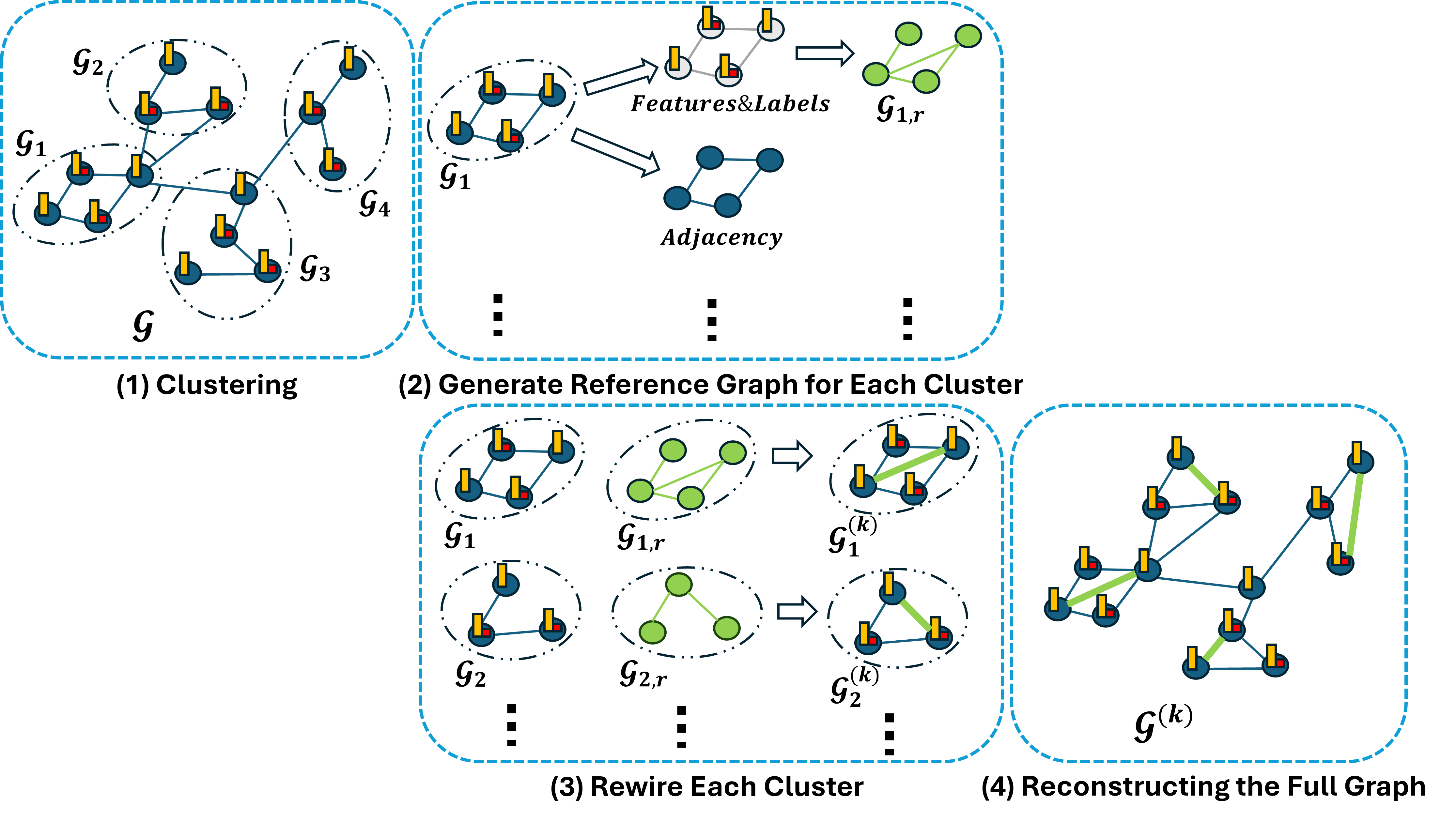}
    \caption{Rewiring method overview: (1) Cluster the original graph into clusters. (2) Construct a reference graph for each cluster using node features (yellow rectangles) and available labels (red squares). (3) Rewire each cluster by modifying edges based on its reference graph. (4) Reconstruct the fully rewired graph by incorporating inter-cluster edges.  We denote the $i$-th cluster as $\mathcal{G}_i$, its reference graph as $\mathcal{G}_{i,r}$, and the rewired cluster as $\mathcal{G}_i^{(k)}$.}
    \label{fig:alg-diagram}
\end{figure*}

The first step in constructing the reference graph $\mathcal{G}_r$ is to build an affinity matrix $\mathbf{W}_D \in \mathbb{R}^{n \times n}$ based on the node feature vectors, where the elements are given by the Gaussian kernel $ \mathbf{W}_D(i,j) = \exp \left( -\frac{d^2(x_i, x_j)}{\epsilon} \right) $, for $i, j \in \{1, \dots, n\}$. Here, $ d(\cdot, \cdot) $ is a distance metric in $ \mathbb{R}^d $ (e.g., Euclidean distance), and $ \epsilon $ is a hyperparameter that controls the scale of the affinity.

Next, we normalize the affinity matrix to obtain the data kernel $\mathbf{D}$, using a standard kernel normalization procedure  \cite{mendelman2025supervised, coifman2006diffusion}. We compute a diagonal matrix $\mathbf{D}_1$ whose diagonal elements consist of the sum of the rows of $\mathbf{W_D}$ and use it to obtain the intermediate matrix $\mathbf{\widetilde{D}} = \mathbf{D}_1^{-1} \mathbf{W_D} \mathbf{D}_1^{-1}$. Then, we compute another diagonal matrix $\mathbf{D}_2$ consisting of the row sums of $\mathbf{\widetilde{D}}$ and apply a second normalization step, yielding the data kernel $\mathbf{D} = \mathbf{D}_2^{-\frac{1}{2}} \mathbf{\widetilde{D}} \mathbf{D}_2^{-\frac{1}{2}}$.

For the label-based affinity matrix, since the labels of the validation and test sets are unknown, we define the following binary matrix $\mathbf{W_P} \in \mathbb{R}^{n \times n}$:
\begin{equation}\label{eq:gaussian-kernel-labels}
\mathbf{W_P}(i,j) = 
\begin{cases}
1, & \text{if } i,j \leq \overline{n} \text{ and } \overline{y}_i = \overline{y}_j, \text{ or } i=j, \\
0, & \text{otherwise}.
\end{cases}
\end{equation}
Here for simplicity, we assume categorical labels for node classification, but $\mathbf{W_P}$ can be constructed based on continuous labels for graph regression with label affinities.
This matrix $ \mathbf{W_P} $ is then normalized using the same normalization applied to $ \mathbf{W_D} $, yielding the label kernel $ \mathbf{P} $. 

We consider the following product of the kernels $\mathbf{\Gamma} = \mathbf{PDP}$, 
representing a label-driven diffusion process with three consecutive steps: propagation within classes using available labels, diffusion via node feature similarity across all nodes, and a final propagation through labels.
This process uses node features to ``complete'' missing labels by propagating label information to unlabeled nodes, capturing the shared geometry between the node features and labels, as demonstrated in \cite{mendelman2025supervised}.
See Appendix \ref{sec:kernels-viz}, where we visualize the difference between $ \mathbf{P}\mathbf{D}\mathbf{P} $ and $ \mathbf{D} $. 

Finally, we use $\mathbf{\Gamma}$ to define the edge set of the reference graph $\mathcal{E}_r$.
Since the elements of $\mathbf{\Gamma}$ are continuous, we clip them to obtain a binary matrix that corresponds to the adjacency matrix of an unweighted graph.
To this end, we first compute the mean of each row in the matrix $\mathbf{\Gamma}$, given for the $i$-th row by $\mu_i = \frac{1}{n} \sum_{j=1}^{n} \mathbf{\Gamma}(i,j)$.
We then clip the elements of each row $ i $ based on the mean value $ \mu_i $, resulting in the clipped kernel $ \hat{\mathbf{\Gamma}} $:  
\begin{equation}\label{eq:clip-kernel}
\hat{\mathbf{\Gamma}}(i,j) = 
\begin{cases} 
0, & \text{if } \mathbf{\Gamma}(i,j) < \mu_i, \\
1, & \text{if } \mathbf{\Gamma}(i,j) \geq \mu_i, \\
\end{cases}
\end{equation}
The binary matrix $ \hat{\mathbf{\Gamma}} $ defines the edge set of $ \mathcal{G}_r $, given by $ \mathcal{E}_r = \{ (i, j) \mid \hat{\mathbf{\Gamma}}(i,j) > 0 \} $.

After constructing $\mathcal{G}_r$, it is used to rewire $\mathcal{G}$ by adding or deleting $k$ edges, as described in Section \ref{sec:increase-homo-theo}. Importantly, improvement in homophily is guaranteed only when the conditions in Prop. \ref{homo-theo-add-1} and/or Prop. \ref{homo-theo-delete-1} are satisfied. To check if these conditions hold, we approximate $H(\mathcal{G})$ and $H(\mathcal{G}_r)$ using the available training and validation labels (validation labels are used solely for verifying homophily, not as part of the method). For a demonstration of the approximation’s effectiveness, see Appendix \ref{subsec:approx-homo}.

Given the obtained reference graph $ \mathcal{G}_r$, we rewire the original graph $\mathcal{G}$ following the framework from Section~\ref{sec:increase-homo-theo}. Edge addition ($k > 0$) is performed by randomly selecting $k$ edges from $\mathcal{E}_r \setminus \mathcal{E}$ and adding them to $\mathcal{G}$. Similarly, for edge deletion ($k < 0$), we remove $|k|$ randomly selected edges from $\mathcal{E} \cap \mathcal{E}_r^c$ using the complement graph $\mathcal{G}_r^c$. The parameter $k$, representing the number of edges added or deleted, is treated as a hyperparameter in our method. This results in the rewired graph $\mathcal{G}^{(k)}$.

\begin{wrapfigure}{r}{0.47\textwidth}
\begin{minipage}{0.47\textwidth}
\begin{algorithm}[H]
\small
\caption{REFine}
\label{alg:hrgr}
\begin{algorithmic}
\STATE \textbf{Input:} graph $\mathcal{G}$, scale parameter $\epsilon$, cluster size $c$, \# added/deleted edges per cluster $k$
\end{algorithmic}
\begin{algorithmic}
\STATE Partition $\mathcal{G}$ to $N$ clusters
\STATE {\textbf{for} $i = 1$ \textbf{to} $N$ \textbf{do}}
\end{algorithmic}
\begin{algorithmic}[1]
\setlength{\itemindent}{0.5cm}
\STATE Construct $ \mathbf{\Gamma} = \mathbf{PDP} $ from data and labels
\STATE Clip $\mathbf{\Gamma}$ to obtain $\mathbf{\hat{\Gamma}}$ (Eq.~\ref{eq:clip-kernel})
\STATE Obtain the edge set $ \mathcal{E}_{l,r} $ from $ \hat{\mathbf{\Gamma}} $
\STATE Obtain $\mathcal{E}_l^{(k)}$ using $\mathcal{E}_l$ and $\mathcal{E}_{l,r}$ (Eq.~\ref{eq:rewired-edge-set})
\setlength{\itemindent}{0cm}
\end{algorithmic}
\begin{algorithmic}
\STATE \textbf{end for}
\end{algorithmic}
\begin{algorithmic}
\STATE Reconstruct the full rewired graph $\mathcal{G}^{(k)}$
\end{algorithmic}
\end{algorithm}
\end{minipage}
\end{wrapfigure}

\textbf{Scaling up.} As our method relies on kernel operations, its computational cost becomes prohibitive for large graphs. To ensure scalability, we cluster the graph $\mathcal{G}$ into $N = \frac{|\mathcal{V}|}{c}$ balanced clusters using the METIS algorithm \cite{karypis1998fast}, where $c$ is the cluster size treated as a hyperparameter. This yields the set of graphs $\{\mathcal{G}_l = (\mathcal{V}_l, \mathcal{E}_l)\}_{l=1}^{N}$. Each cluster $\mathcal{G}_l$ is then rewired independently based on its reference graph $\mathcal{G}_{l,r}=(\mathcal{V}_l, \mathcal{E}_{l,r})$, constructed using the procedure described above. Finally, the full graph is reconstructed by merging all rewired clusters while preserving the original inter-cluster edges.

We term our rewiring algorithm, which uses a reference graph for refining homophily, \textit{REFine}. Its key steps are summarized in Algorithm~\ref{alg:hrgr} and illustrated in Figure~\ref{fig:alg-diagram}.

\section{Experiments}\label{sec:experiments}
\begin{table}[t]
\caption{Test accuracy for various node classification datasets, comparing None (no rewiring), SDRF, FoSR, BORF, and our REFine. "T/O" indicates a timeout, "OOM" indicates out of memory. Best results are bolded, second-best are underlined, and "REFine Gain" shows the accuracy difference between REFine and the best baseline. See Appendix~\ref{subsec:complete-res} for the complete table with SEM.}
\label{results-table-all}
\centering
\scriptsize
\setlength{\tabcolsep}{1mm}
\begin{adjustbox}{max width=1\columnwidth}
\begin{tabular}{cccccccccccc}
\hline
                             & Cornell                                     & Texas                                       & Wisconsin                                   & Chameleon                                  & Squirrel                                   & BlogCatalog                                & Actor                                      & BGP                                        & Roman-empire                                & EllipticBitcoin                            & Genius                                     \\
Nodes                          & 183                                         & 183                                         & 251                                         & 851                                        & 2223                                       & 5196                                       & 7600                                       & 10k                                        & 22k                                         & 203k                                       & 421k                                       \\
$H(\mathcal{G})$             & 0.12                                        & 0.06                                        & 0.17                                        & 0.23                                       & 0.2                                        & 0.4                                        & 0.21                                       & 0.28                                       & 0.04                                        & 0.71                                       & 0.59                                       \\ \hline
\multicolumn{12}{c}{GCN}                                                                                                                                                                                                                                                                                                                                                                                                                                                                                                                        \\ \hline
None                         & $51.8$                                      & $59.7$                                      & $57.2$                                      & $41.3$                                     & $40.7$                                     & $77.6$                                     & $28.4$                                     & $53.4$                                     & $37$                                        & \underline{$87.1$}                         & \underline{$83.1$}                         \\
SDRF                         & \underline{$58.4$}                          & \underline{$65.4$}                          & \underline{$68.6$}                          & $40.6$                                     & $\mathbf{41.5}$                            & $77.9$                                     & \underline{$29.2$}                         & \underline{$53.9$}                         & \underline{$46.2$}                          & $87$                                       & OOM                                        \\
FoSR                         & $51.6$                                      & $62.4$                                      & $60.5$                                      & \underline{$43.1$}                         & $39.7$                                     & $77.4$                                     & $28.1$                                     & $53.3$                                     & $36.9$                                      & $85.9$                                     & $82.2$                                     \\
BORF                         & $53 $                                       & $62.1$                                      & $56.3$                                      & $41.6$                                     & $40.3$                                     & \underline{$78$}                           & $28.3$                                     & $52$                                       & $35.2$                                      & T/O                                        & T/O                                        \\
REFine~(ours)                   & $\mathbf{71.3}$                             & $\mathbf{79.1}$                             & $\mathbf{82.5}$                             & $\mathbf{44.1}$                            & \underline{$41.1$}                         & $\mathbf{85.2}$                            & $\mathbf{31.3}$                            & $\mathbf{59.3}$                            & $\mathbf{58.8}$                             & $\mathbf{89.5}$                            & $\mathbf{83.8}$                            \\
\rowcolor{gray!15} REFine Gain & \textbf{\textcolor{darkgreen}{$\uparrow$}}$12.9$ & \textbf{\textcolor{darkgreen}{$\uparrow$}}$13.7$ & \textbf{\textcolor{darkgreen}{$\uparrow$}}$13.9$ & \textbf{\textcolor{darkgreen}{$\uparrow$}}$1$   & \textbf{\textcolor{red}{$\downarrow$}}$0.4$     & \textbf{\textcolor{darkgreen}{$\uparrow$}}$7.2$ & \textbf{\textcolor{darkgreen}{$\uparrow$}}$2.1$ & \textbf{\textcolor{darkgreen}{$\uparrow$}}$5.4$ & \textbf{\textcolor{darkgreen}{$\uparrow$}}$12.6$ & \textbf{\textcolor{darkgreen}{$\uparrow$}}$2.4$ & \textbf{\textcolor{darkgreen}{$\uparrow$}}$0.7$ \\ \hline
\multicolumn{12}{c}{GATv2}                                                                                                                                                                                                                                                                                                                                                                                                                                                                                                                      \\ \hline
None                         & $43.7$                                      & $53.2$                                      & $53.3$                                      & $40.8$                                     & $37.4$                                     & $80.3$                                     & $29.6$                                     & $62.3$                                     & $14.8$                                      & $89.6$                                     & \underline{$81.7$}                         \\
SDRF                         & \underline{$51$}                            & \underline{$61.8$}                          & \underline{$63.3$}                          & $39.5$                                     & \underline{$37.7$}                         & \underline{$83.3$}                         & \underline{$29.7$}                         & $63.2$                                     & \underline{$20.8$}                          & \underline{$90.6$}                         & OOM                                        \\
FoSR                         & $46$                                        & $59.7$                                      & $60.9$                                      & $40.1$                                     & \underline{$37.7$}                         & $81.6$                                     & $29.2$                                     & $62.8$                                     & $14.7$                                      & $89.9$                                     & $81.2$                                     \\
BORF                         & $44.6$                                      & $55.1$                                      & $52.5$                                      & \underline{$41.2$}                         & $36.7$                                     & $82.2$                                     & $28.6$                                     & \underline{$63$}                           & $14.9$                                      & T/O                                        & T/O                                        \\
REFine~(ours)                   & $\mathbf{74}$                               & $\mathbf{82.4}$                             & $\mathbf{84.9}$                             & $\mathbf{43.5}$                            & $\mathbf{38.8}$                            & $\mathbf{85.9}$                            & $\mathbf{35.1}$                            & $\mathbf{63.3}$                            & $\mathbf{28.5}$                             & $\mathbf{90.8}$                            & $\mathbf{83.6}$                            \\
\rowcolor{gray!15} REFine Gain & \textbf{\textcolor{darkgreen}{$\uparrow$}}$23$   & \textbf{\textcolor{darkgreen}{$\uparrow$}}$20.6$ & \textbf{\textcolor{darkgreen}{$\uparrow$}}$21.6$ & \textbf{\textcolor{darkgreen}{$\uparrow$}}$2.3$ & \textbf{\textcolor{darkgreen}{$\uparrow$}}$1.1$ & \textbf{\textcolor{darkgreen}{$\uparrow$}}$2.6$ & \textbf{\textcolor{darkgreen}{$\uparrow$}}$5.4$ & \textbf{\textcolor{darkgreen}{$\uparrow$}}$0.3$ & \textbf{\textcolor{darkgreen}{$\uparrow$}}$7.7$  & \textbf{\textcolor{darkgreen}{$\uparrow$}}$0.2$ & \textbf{\textcolor{darkgreen}{$\uparrow$}}$1.9$ \\ \hline
\multicolumn{12}{c}{APPNP}                                                                                                                                                                                                                                                                                                                                                                                                                                                                                                                                                                             \\ \hline
None                         & $49.4$                                           & $61.9$                                           & $62.1$                                           & $40.2$                                          & $35.4$                                          & $95.7$                                          & $33.8$                                          & \underline{$63.6$}                              & $14$                                             & \underline{$87.4$}                              & \underline{$81.9$}                              \\
SDRF                         & \underline{$63.7$}                               & \underline{$77$}                                 & \underline{$75$}                                 & $41$                                            & $35.6$                                          & \underline{$95.8$}                              & $33.8$                                          & \underline{$63.6$}                              & \underline{$22.6$}                               & \underline{$87.4$}                              & OOM                                             \\
FoSR                         & $55.1$                                           & $67$                                             & $68.4$                                           & \underline{$41.8$}                              & $35.7$                                          & $\mathbf{95.9}$                                 & \underline{$33.9$}                              & \underline{$63.6$}                              & $14.3$                                           & $86.7$                                          & $81.2$                                          \\
BORF                         & $55.1$                                           & $65.1$                                           & $66$                                             & $39.6$                                          & \underline{$36.2$}                              & $95.5$                                          & $33.6$                                          & $63.4$                                          & $15.5$                                           & T/O                                             & T/O                                             \\
REFine~(ours)                   & $\mathbf{74.6}$                                  & $\mathbf{82.4}$                                  & $\mathbf{86}$                                    & $\mathbf{44.5}$                                 & $\mathbf{38.8}$                                 & $95.7$                                          & $\mathbf{34.8}$                                 & $\mathbf{64.3}$                                 & $\mathbf{30.8}$                                  & $\mathbf{89.8}$                                 & $\mathbf{83.6}$                                 \\
\rowcolor{gray!15} REFine Gain & \textbf{\textcolor{darkgreen}{$\uparrow$}}$10.9$ & \textbf{\textcolor{darkgreen}{$\uparrow$}}$5.4$  & \textbf{\textcolor{darkgreen}{$\uparrow$}}$11$   & \textbf{\textcolor{darkgreen}{$\uparrow$}}$2.7$ & \textbf{\textcolor{darkgreen}{$\uparrow$}}$2.6$ & \textbf{\textcolor{red}{$\downarrow$}}$0.2$     & \textbf{\textcolor{darkgreen}{$\uparrow$}}$0.9$ & \textbf{\textcolor{darkgreen}{$\uparrow$}}$0.7$ & \textbf{\textcolor{darkgreen}{$\uparrow$}}$8.2$  & \textbf{\textcolor{darkgreen}{$\uparrow$}}$2.4$ & \textbf{\textcolor{darkgreen}{$\uparrow$}}$1.7$ \\ \hline
\end{tabular}
\end{adjustbox}
\end{table}

\begin{table}[t]
\captionof{table}{Test accuracy on heterophilic graphs for specialized GNNs vs. ST+REFine. "OOM" indicates out of memory. Best results are bolded, second-best are underlined. See Appendix~\ref{subsec:complete-res} for the complete table with SEM.}
\label{tab:spec-vs-traditional}
\centering
\scriptsize
\setlength{\tabcolsep}{1mm}
\begin{tabular}{cccccccccc}
\hline
                & Cornell            & Texas              & Wisconsin          & Chameleon          & Squirrel           & BlogCatalog        & Actor              & BGP                & Roman-empire       \\ \hline
MixHop          & $71.9$             & $79.1$             & \underline{$83.1$} & \underline{$43.2$} & $39.8$             & OOM                & $\mathbf{36.2}$    & $64.3$             & $32.1$             \\
H$_2$GCN        & \underline{$73.2$} & $\mathbf{82.7}$    & $82.3$             & $41.8$             & \underline{$40.4$} & $\mathbf{96.4}$    & $30.3$             & \underline{$64.9$} & $34.3$             \\
GPRGNN          & $70.8$             & $81$               & $82.5$             & $40.9$             & $38.5$             & \underline{$95.7$} & $35.4$             & $\mathbf{65}$      & $20.5$             \\
OrderedGNN      & $70.8$             & $77.8$             & $82.1$             & $38$               & $34.3$             & \underline{$95.7$} & \underline{$35.8$} & $\mathbf{65}$      & \underline{$45.5$} \\
ST+REFine~(ours) & $\mathbf{74.6}$    & \underline{$82.4$} & $\mathbf{86}$      & $\mathbf{44.5}$    & $\mathbf{41.1}$    & \underline{$95.7$} & $35.1$             & $64.3$             & $\mathbf{58.8}$    \\ \hline
\end{tabular}
\end{table}

Table~\ref{results-table-all} compares the node classification performance of our REFine\footnote{The code is available at \url{https://github.com/harel147/REFine}.} with several well-established rewiring methods: SDRF \cite{topping2021understanding}, FoSR \cite{karhadkar2022fosr}, and BORF \cite{nguyen2023revisiting}, across multiple GNN architectures: GCN \cite{kipf2016semi}, GATv2 \cite{brody2021attentive}, and APPNP \cite{gasteiger2018predict}. We evaluate 11 datasets, ranging from small datasets with hundreds of nodes to large datasets with up to $421k$ nodes, each with varying levels of homophily. The table depicts both the number of nodes and the homophily for each dataset. Our REFine outperforms the baseline methods in most cases, significantly improving performance compared to the leading baseline.
For the complete table, including the standard error of the mean (SEM), see Appendix~\ref{subsec:complete-res}. Due to the high computational cost of the competing rewiring methods for large datasets, we adapted all compared methods to use the same clustering strategy as in our approach (Section \ref{sec:method}). 
See Appendix \ref{subsec:complete-res} for results on high-homophily datasets (Cora, Citeseer, Pubmed), where our method and the baselines showed no significant improvement over training on the original graph.

Since REFine enhances the homophily of a graph, it makes the graph more compatible with standard message passing GNNs. To demonstrate the practical benefit, we compare the performance of standard GNNs combined with REFine to that of specialized GNNs designed for heterophilic graphs. Table \ref{tab:spec-vs-traditional} presents a comparison of node classification performance on heterophilic graphs ($H(\mathcal{G})<0.5$). It compares the performance of the top-performing standard GNNs (GCN, GATv2, and APPNP) combined with REFine rewiring (denoted as ST+REFine) for each dataset, against well-established specialized GNNs for heterophilic graphs: MixHop \cite{abu2019mixhop}, GPRGNN \cite{chien2020adaptive}, H$_2$GCN \cite{zhu2020beyond}, and OrderedGNN \cite{song2023ordered}. Notably, on most datasets, standard GNNs with REFine either match or outperform the specialized models. For the complete table, including the standard error of the mean (SEM), see Appendix~\ref{subsec:complete-res}.

In Appendix \ref{subsec:homophily-diagram}, we compare the homophily of the original graph $\mathcal{G}$, the reference graph $\mathcal{G}_r$, and the rewired graph $\mathcal{G}^{(k)}$ across multiple datasets, demonstrating the effectiveness of our rewiring method in enhancing  homophily.
See Appendix \ref{sec:appendix-implementation} for additional implementation details, including parameter choices for our method and the baselines.

\begin{wrapfigure}{r}{0.37\textwidth}
    \centering
    \begin{subfigure}[t]{0.18\textwidth}
        \centering
        \includegraphics[width=\textwidth]{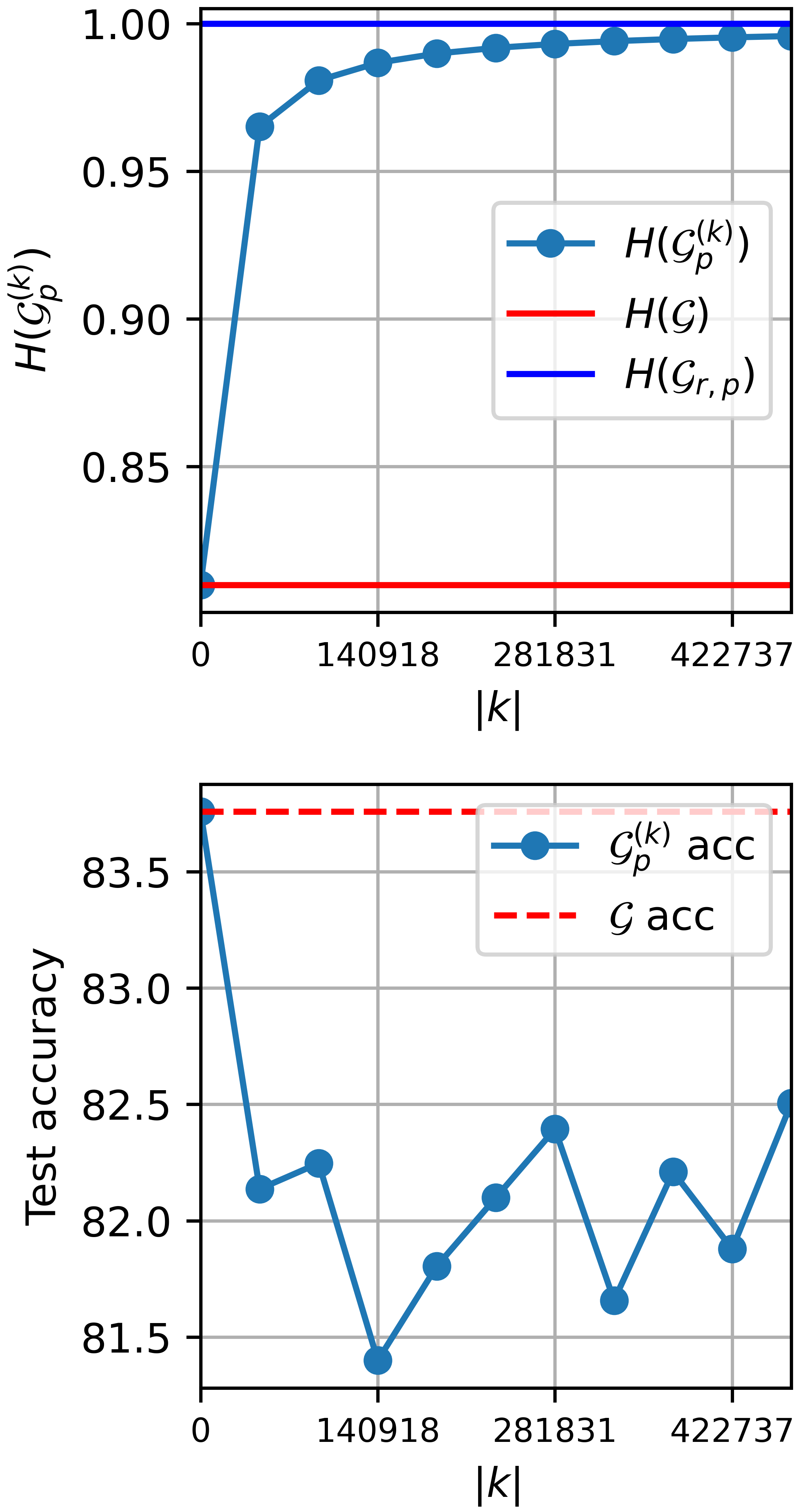}
        \caption{Edge addition}
        \label{fig:sub1-cora}
    \end{subfigure}
    \begin{subfigure}[t]{0.18\textwidth}
        \centering
        \includegraphics[width=\textwidth]{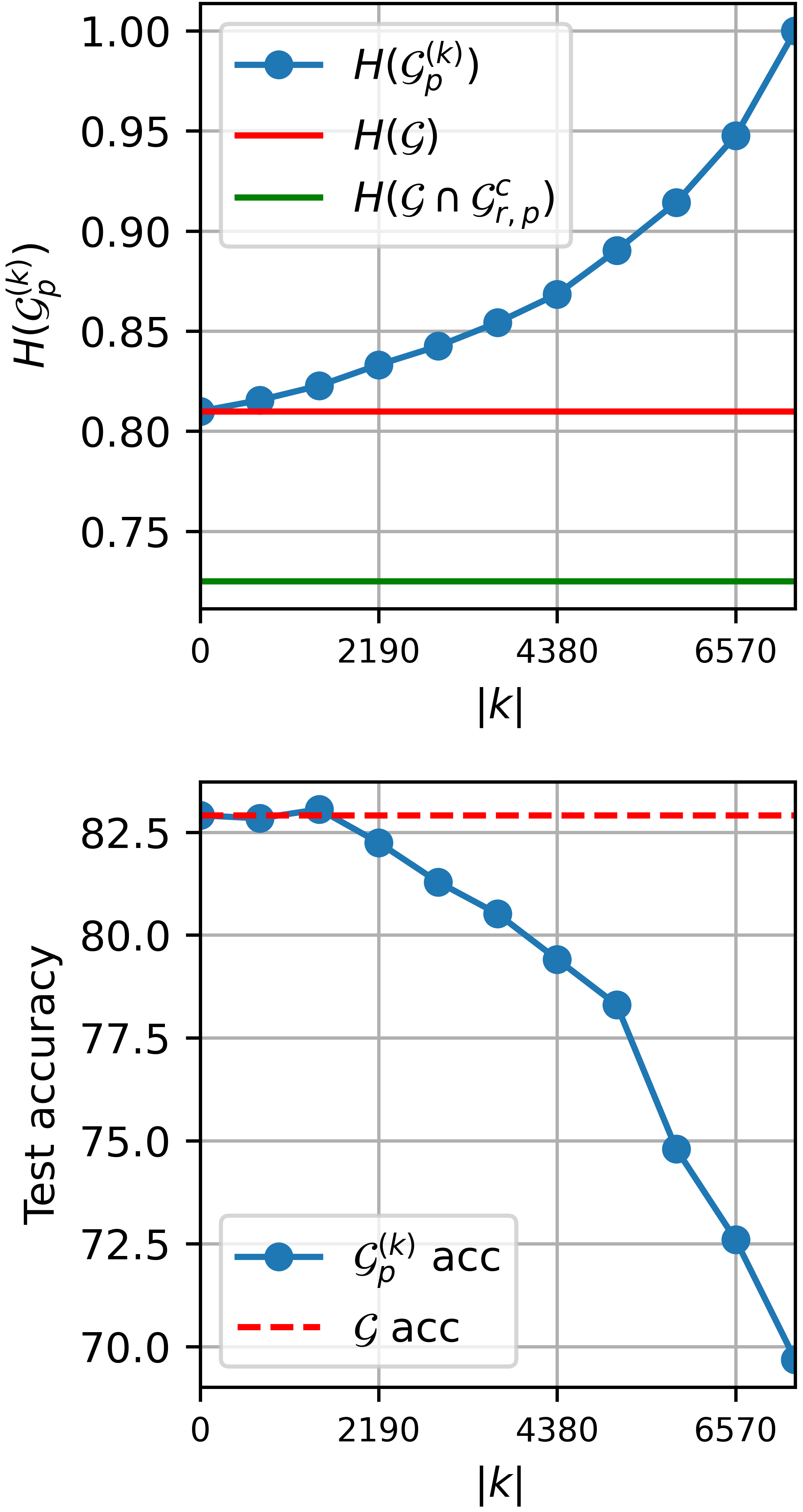}
        \caption{Edge deletion}
        \label{fig:sub2-cora}
    \end{subfigure}
    \caption{Rewiring Cora using $\mathcal{G}_r$ built only from train labels.}
    \label{fig:cora-using-p}
\end{wrapfigure}

\subsection{Ablation studies \& analysis}\label{subsec:ablation}

\textbf{Homophily and rewiring effectiveness.} 
In Appendix \ref{subsec:apendix-homophily-effectiveness}, we empirically demonstrate that datasets with lower original homophily tend to show greater test accuracy gains from our rewiring method. This is likely because the reference graph typically has much higher homophily than the original in such cases, resulting in a significantly more homophilic rewired graph and thus better performance.
 
\textbf{Rewiring only using train labels.} 
Figure~\ref{fig:cora-using-p} shows the result of rewiring the Cora dataset using a simpler reference graph constructed solely from the training labels, i.e., $\mathbf{\Gamma = P}$ instead of $\mathbf{\Gamma = PDP}$. 
We see that the test accuracy declines as $|k|$ increases. This reflects a limited generalization capability to unlabeled nodes when using only the training labels without considering the data.

\textbf{Cluster size.} Larger cluster sizes can sometimes lead to slight performance gains, but with increased runtime. Our experiments use cluster sizes of 100 or 500 depending on graph size. See Appendix~\ref{subsec:apendix-cluster-size} for details.

\section{Complexity and runtime}

\begin{wraptable}{r}{0.35\textwidth}
\centering
\scriptsize
\caption{Comparison of method complexities: $n$ nodes, $c$ cluster size, $m$ edges, and $d_{\text{max}}$ max node degree.}
\label{tab:complexity}
\begin{tabular}{ll}
\hline
Method     & Complexity               \\ \hline
SDRF       & $O(m d_{\text{max}}^2)$ per edge  \\
BORF       & $O(m d_{\text{max}}^3)$ per cluster \\
FoSR       & $O(n^2)$ per edge          \\
REFine~(Ours) & $O(c^3)$ per cluster         \\ \hline
\end{tabular}
\end{wraptable}

REFine has a per-cluster complexity of $O(c^3)$, leading to an overall complexity of $O(c^2 \cdot n)$ for the entire graph, where $n$ is the number of nodes in the graph and $c$ is the cluster size, set to 100 or 500 in our experiments. Notably, when $n \gg c$, the complexity simplifies to $O(n)$, indicating that the method remains linear in the number of nodes, significantly improving efficiency over existing methods, as shown in Table~\ref{tab:complexity}. Furthermore, REFine operates in parallel on each cluster, and in our GPU-based implementation, the overall complexity is $O(\frac{c^2}{g} \cdot n)$, where $g$ is the number of available GPU units, making our method even more efficient in practice. We provide a complexity analysis and runtime comparisons in Appendix~\ref{sec:complexity-analysis}.

\section{Conclusion}\label{sec:conclusion}
In this paper, we introduced a rewiring framework that enhances graph homophily using a reference graph, with theoretical guarantees for increasing the homophily of the rewired graph. We validated the effectiveness of our framework for node classification on real-world datasets using synthetic reference graphs with controlled homophily. To extend its applicability, we proposed a label-driven diffusion method for constructing homophilic reference graphs from node features and training labels. While our method achieves strong empirical results, it is limited, like other homophily-enhancing approaches, by its inapplicability to graph classification and its reliance on the assumption that feature space offers a meaningful measure of similarity that aligns with the labels. Despite these limitations, our framework presents a systematic approach to enhancing graph homophily, leading to improved performance in node classification.

Future directions include exploring simultaneous edge addition and deletion, developing alternative strategies for constructing reference graphs, and leveraging reference graphs to enhance additional graph properties beyond homophily.

\newpage

\bibliographystyle{unsrt}
\bibliography{paper}

\newpage
\appendix

\section{Proofs}\label{sec:all-proofs}

\subsection{Proof of Theorem \ref{smoothness-theo-1}}\label{subsec:smoothness-proof}
\begin{proof}[Proof of Theorem \ref{smoothness-theo-1}]
Some of the steps of this proof follow Theorem 3.1 from \cite{xing2024less}, specifically adopting their definition of linearly separable embeddings.

To prove Theorem \ref{smoothness-theo-1}, we assume the existence of a linear classifier, parameterized by $\mathbf{W} \in \mathbb{R}^{d \times c}$, where $d$ is the embedding dimension and $c$ is the number of classes, which satisfies the condition $\mathbf{Y}=\mathbf{Z} \mathbf{W}$.

Now, we express the term $\sum_{(u,v) \in \mathcal{E}} A_{u,v} \|\mathbf{y}_u - \mathbf{y}_v\|^2$ using the smoothness term:

\begin{align*}
    \sum_{(u,v) \in \mathcal{E}} A_{u,v} \|\mathbf{y}_u - \mathbf{y}_v\|^2
    &= \sum_{(u,v) \in \mathcal{E}} A_{u,v} \|\mathbf{z}_u \mathbf{W} - \mathbf{z}_v \mathbf{W}\|^2 
    && \dots \mathbf{Y}=\mathbf{Z} \mathbf{W} \\
    &= 2 \|\mathbf{W}\|^2 \frac{1}{2} \sum_{(u,v) \in \mathcal{E}} A_{u,v} \|\mathbf{z}_u - \mathbf{z}_v\|^2 \\
    &= 2 \|\mathbf{W}\|^2 \text{tr}(Z^T \mathcal{L} Z) 
    && \dots \text{Smoothness definition} \\
\end{align*}
where $\mathbf{y}_u$ denotes the one-hot label of node $u$.

Thus we get:
\[
\text{tr}(Z^T \mathcal{L} Z) = \frac{1}{2\|\mathbf{W}\|^2} \sum_{(u,v) \in \mathcal{E}} A_{u,v} \|\mathbf{y}_u - \mathbf{y}_v\|^2
\]

Next, defining $\alpha_m = \min_{(u,v) \in \mathcal{E}} A_{u,v}$ as the minimum nonzero entry of $\mathbf{A}$, we obtain:

\begin{align*}
    \text{tr}(Z^T \mathcal{L} Z) 
    &= \frac{1}{2\|\mathbf{W}\|^2} \sum_{(u,v) \in \mathcal{E}} A_{u,v} \|\mathbf{y}_u - \mathbf{y}_v\|^2 \\
    &\geq \frac{\alpha_m}{2\|\mathbf{W}\|^2} \sum_{(u,v) \in \mathcal{E}} \|\mathbf{y}_u - \mathbf{y}_v\|^2.
\end{align*}

We now replace $\|\mathbf{y}_u - \mathbf{y}_v\|^2$ with the indicator function $I(\mathbf{y}_u \neq \mathbf{y}_v)$:

\begin{align*}
    \text{tr}(Z^T \mathcal{L} Z) 
    &\geq \frac{\alpha_m}{2\|\mathbf{W}\|^2} \sum_{(u,v) \in \mathcal{E}} I(\mathbf{y}_u \neq \mathbf{y}_v) \\
    &= \frac{\alpha_m |\mathcal{E}|}{2\|\mathbf{W}\|^2} \left( \frac{1}{|\mathcal{E}|} \sum_{(u,v) \in \mathcal{E}} I(\mathbf{y}_u \neq \mathbf{y}_v) \right) \\
    &= \frac{\alpha_m |\mathcal{E}|}{2\|\mathbf{W}\|^2} \left(1 - \frac{1}{|\mathcal{E}|} \sum_{(u,v) \in \mathcal{E}} I(\mathbf{y}_u = \mathbf{y}_v) \right) \\
    &= \frac{\alpha_m |\mathcal{E}|}{2\|\mathbf{W}\|^2} \left(1 - H(\mathcal{G}) \right).
\end{align*}

\end{proof}

\subsection{Proof of Proposition \ref{homo-theo-add-1}}\label{subsec:add-proof}
\begin{proof}[Proof of Proposition \ref{homo-theo-add-1}]
Let the graph $\mathcal{G}$ have $n + m$ edges, where $n$ edges connect nodes of the same label and $m$ edges connect nodes of different labels.

Let the residual graph $\mathcal{G}_r \setminus \mathcal{G}$ have $n' + m'$ edges, where $n'$ edges connect nodes of the same label and $m'$ edges connect nodes of different labels.

Define $x$ as the number of added edges between nodes of the same label:
\[
x = \sum_{i=1}^k X_i,
\]
where $X_i \sim \text{Bernoulli}\left(\frac{n'}{n' + m'}\right)$ are independent and identically distributed (i.i.d.).

The homophily rate of the rewired graph $\mathcal{G}^{(k)}$ is given by:
\[
H(\mathcal{G}^{(k)}) = \frac{n + x}{n + m + k}.
\]

Taking the expectation over the randomness of $\{X_i\}_{i=1}^k$, we have:
\begin{equation}\label{eq:homophily-Gk}
\mathbb{E}[H(\mathcal{G}^{(k)})] = \frac{n + \mathbb{E}[x]}{n + m + k}.
\end{equation}

Now calculate $\mathbb{E}[x]$:
\[
\mathbb{E}[x] = \mathbb{E}\left[\sum_{i=1}^k X_i\right] = \sum_{i=1}^k \mathbb{E}[X_i] = k \cdot \frac{n'}{n' + m'}.
\]

Substitute $\mathbb{E}[x] = k \cdot \frac{n'}{n' + m'}$ into equation \eqref{eq:homophily-Gk}:
\[
\mathbb{E}[H(\mathcal{G}^{(k)})] = \frac{n + k \cdot \frac{n'}{n' + m'}}{n + m + k}.
\]

Taking the derivative of this expression with respect to $k$ yields:
\[
\frac{\partial \mathbb{E}[H(\mathcal{G}^{(k)})]}{\partial k} = \frac{\frac{n'}{n'+m'}\cdot n + \frac{n'}{n'+m'}\cdot m - n}{(n + m + k)^2}.
\]

The derivative is positive when $\frac{n'}{n'+m'}\cdot n + \frac{n'}{n'+m'}\cdot m - n > 0$. Reorganizing this inequality, we find:
\[
\frac{n'}{n' + m'} > \frac{n}{n + m}.
\]

Thus, when $H(\mathcal{G}_r \setminus \mathcal{G}) > H(\mathcal{G})$, the derivative of $\mathbb{E}[H(\mathcal{G}^{(k)})]$ with respect to $k$ is strictly positive, meaning that increasing the number of added edges increases $\mathbb{E}[H(\mathcal{G}^{(k)})]$. Thus,

\[
\mathbb{E}[H(\mathcal{G}^{(k+1)})]>\mathbb{E}[H(\mathcal{G}^{(k)})]>\mathbb{E}[H(\mathcal{G}^{(0)})]=H(\mathcal{G}).
\]

Conversely, the derivative is negative when $\frac{n'}{n'+m'}\cdot n + \frac{n'}{n'+m'}\cdot m - n < 0$, which implies:
\[
\frac{n'}{n' + m'} < \frac{n}{n + m}.
\]
or equivalently $H(\mathcal{G}_r \setminus \mathcal{G}) < H(\mathcal{G})$. In this case, the derivative is strictly negative, so the highest value of $\mathbb{E}[H(\mathcal{G}^{(k)})]$ occurs when $k = 0$. Thus,
\[
\mathbb{E}[H(\mathcal{G}^{(k+1)})]<\mathbb{E}[H(\mathcal{G}^{(k)})]<\mathbb{E}[H(\mathcal{G}^{(0)})]=H(\mathcal{G}).
\]
\end{proof}

\subsection{Proof of Proposition \ref{homo-theo-delete-1}}\label{subsec:delete-proof}
\begin{proof}[Proof of Proposition \ref{homo-theo-delete-1}]

For simplicity, we define $k > 0$, where $k$ represents the number of deleted edges.

Let the intersection graph $\mathcal{G} \cap \mathcal{G}_r^c$ have $n^* + m^*$ edges, where $n^*$ edges connect nodes of the same label and $m^*$ edges connect nodes of different labels.

Applying the same procedure as in the proof of Proposition \ref{homo-theo-add-1}, we get:

\[
\mathbb{E}[H(\mathcal{G}^{(k)})] = \frac{n - k \cdot \frac{n^*}{n^* + m^*}}{n + m - k}.
\]

Taking the derivative of this expression with respect to $k$ yields:
\[
\frac{\partial \mathbb{E}[H(\mathcal{G}^{(k)})]}{\partial k} = \frac{-\frac{n^*}{n^*+m^*}(n+m) + n}{(n + m - k)^2}.
\]

The derivative is positive when $-\frac{n^*}{n^*+m^*}(n+m) + n > 0$. Reorganizing this inequality, we find:
\[
\frac{n^*}{n^* + m^*} < \frac{n}{n + m}.
\]

This is equivalent to $H(\mathcal{G} \cap \mathcal{G}_r^c) < H(\mathcal{G})$. In this case, the derivative is strictly positive, so increasing the number of deleted edges increases $\mathbb{E}[H(\mathcal{G}^{(k)})]$. Thus, returning to the original notation where $k < 0$ represents deleting $|k|$ edges, we have:
\[
\mathbb{E}[H(\mathcal{G}^{(k-1)})]>\mathbb{E}[H(\mathcal{G}^{(k)})]>\mathbb{E}[H(\mathcal{G}^{(0)})]=H(\mathcal{G}).
\] 

Conversely, the derivative is negative when $-\frac{n^*}{n^*+m^*}(n+m) + n < 0$, which implies:
\[
\frac{n^*}{n^* + m^*} > \frac{n}{n + m}.
\]
or equivalently $H(\mathcal{G} \cap \mathcal{G}_r^c) > H(\mathcal{G})$. In this case, the derivative is strictly negative, so the highest value of $\mathbb{E}[H(\mathcal{G}^{(k)})]$ occurs when $k = 0$, meaning no edges are deleted. Thus, returning to the original notation where $k < 0$ represents deleting $|k|$ edges, we have:

\[
\mathbb{E}[H(\mathcal{G}^{(k-1)})]<\mathbb{E}[H(\mathcal{G}^{(k)})]<\mathbb{E}[H(\mathcal{G}^{(0)})]=H(\mathcal{G}).
\] 

\end{proof}

\newpage
\section{Additional simulations}\label{sec:more-sim}
\begin{figure}[h]
    \centering
    \includegraphics[width=1.0\textwidth]{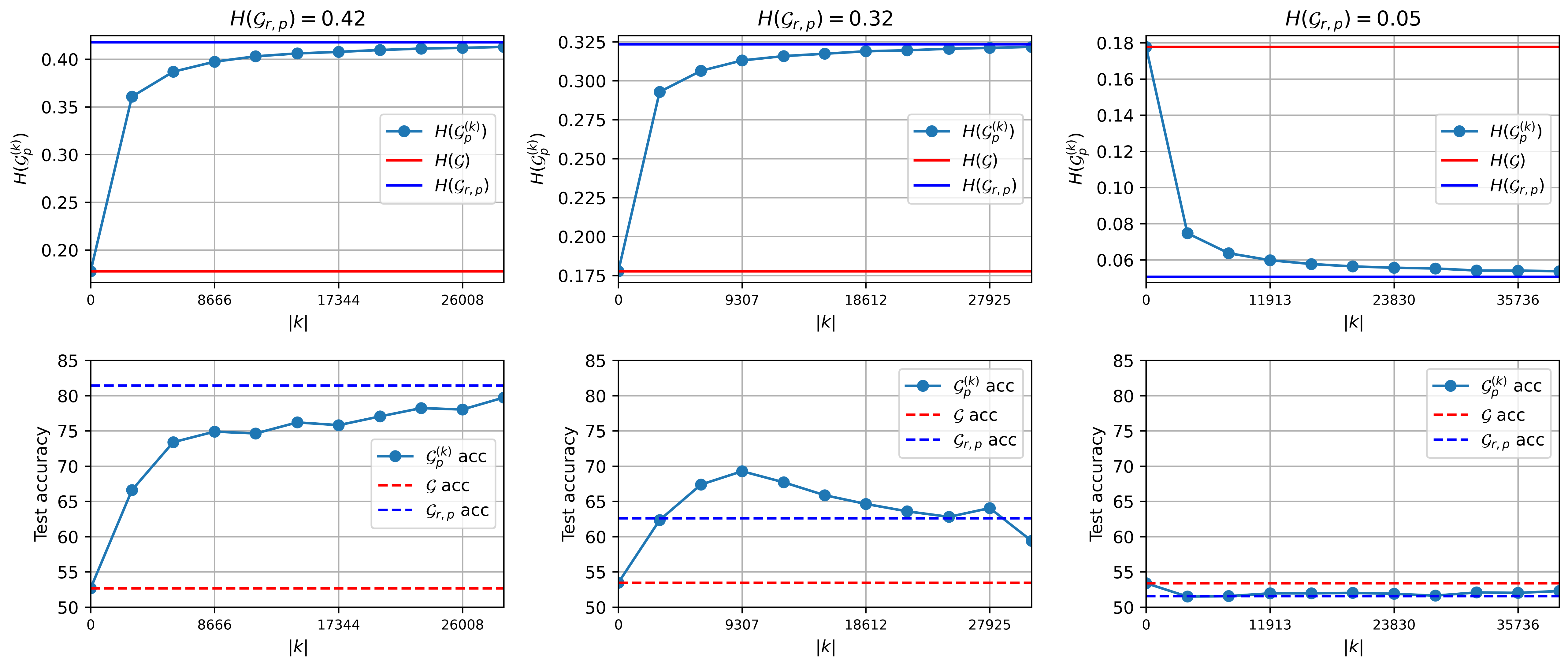}
    \caption{Simulation of edge addition on the Wisconsin dataset.}
    \label{fig:wisconsin-more}
\end{figure}
\begin{figure}[h]
    \centering
    \includegraphics[width=1.0\textwidth]{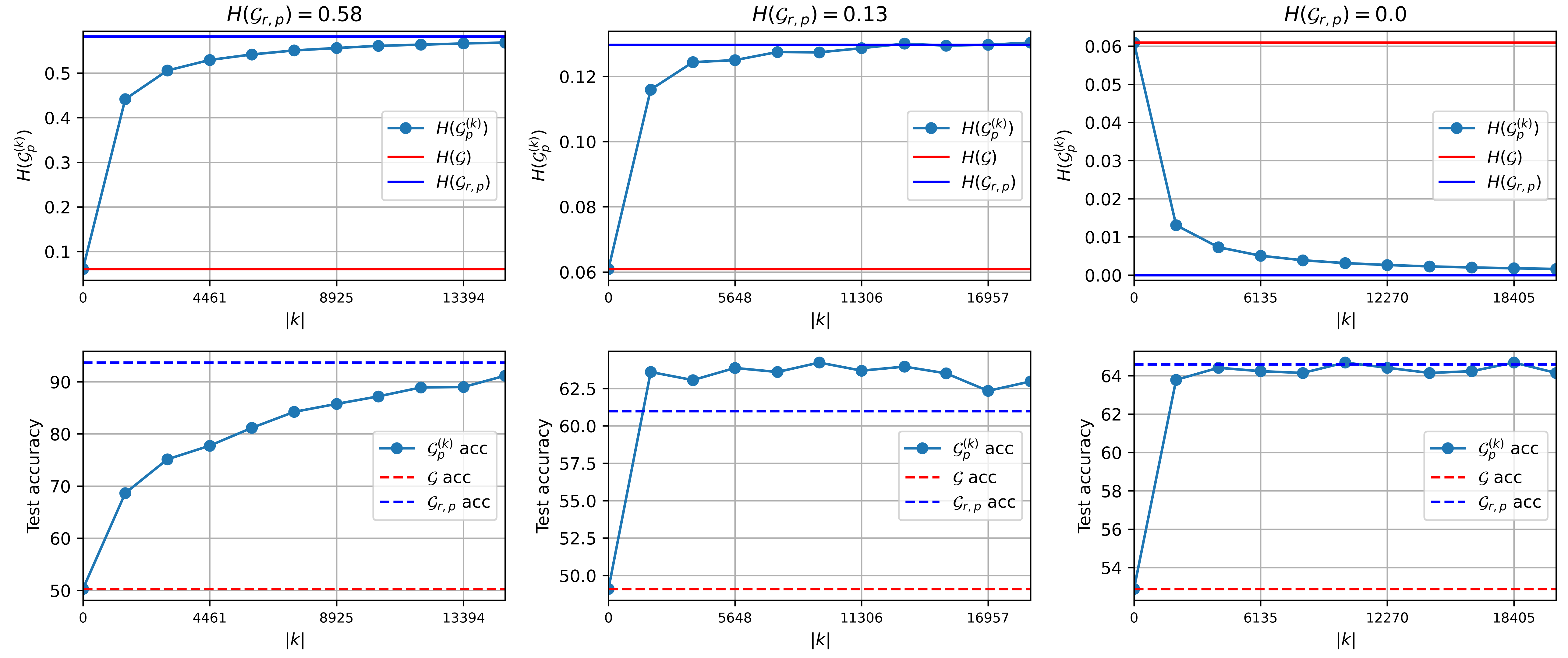}
    \caption{Simulation of edge addition on the Texas dataset.}
    \label{fig:texas-more}
\end{figure}
\begin{figure}[h]
    \centering
    \includegraphics[width=1.0\textwidth]{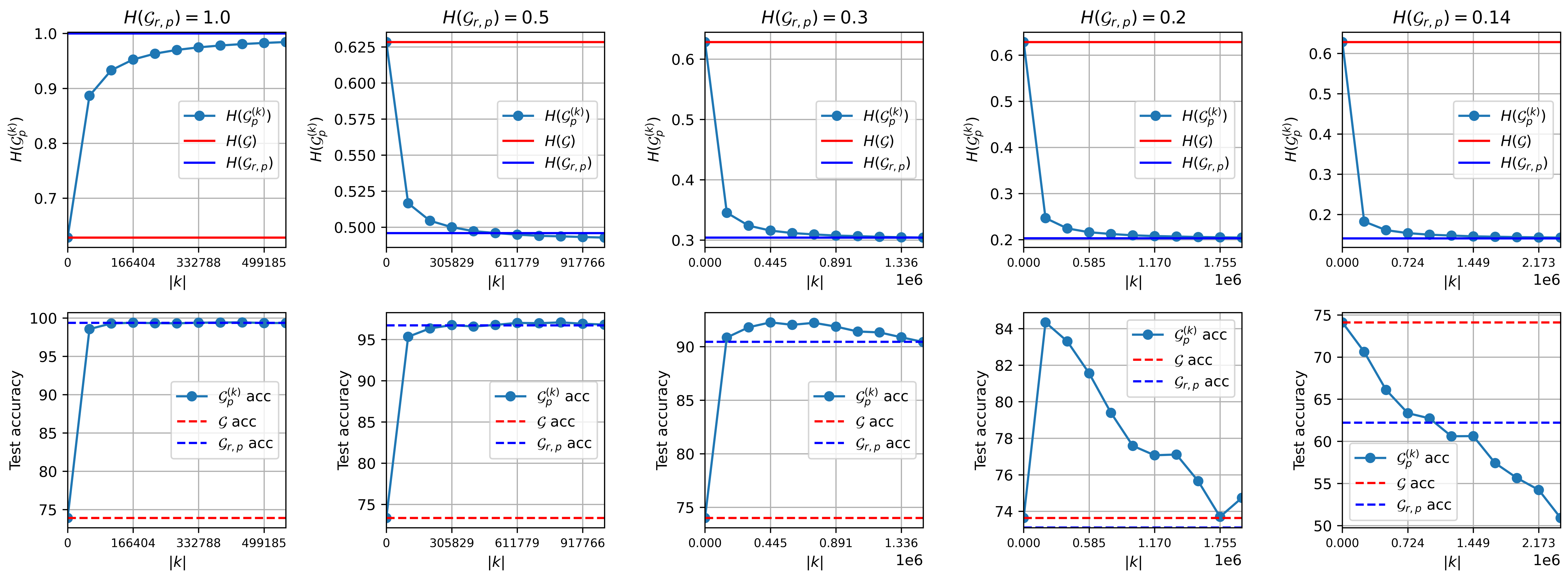}
    \caption{Simulation of edge addition on the Wiki dataset.}
    \label{fig:wiki-more}
\end{figure}

\newpage
\section{Additional method details}
\subsection{Visualization of kernel differences}\label{sec:kernels-viz}
\begin{figure}[h]
    \centering
    \begin{subfigure}[t]{0.49\textwidth}
        \centering
        \includegraphics[width=\textwidth]{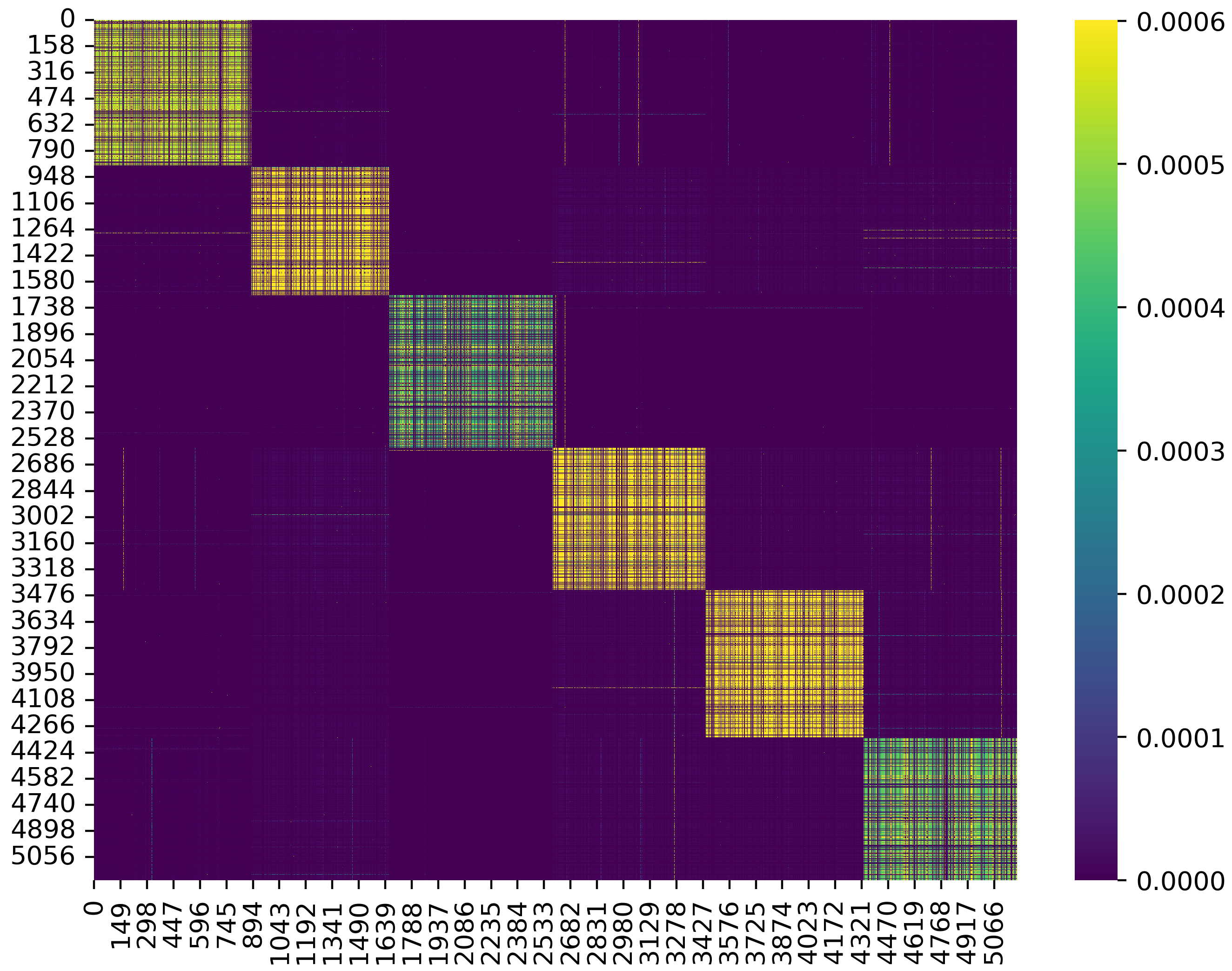}
        \caption{$\mathbf{PDP}$}
        \label{fig:sub1}
    \end{subfigure}
    \begin{subfigure}[t]{0.49\textwidth}
        \centering
        \includegraphics[width=\textwidth]{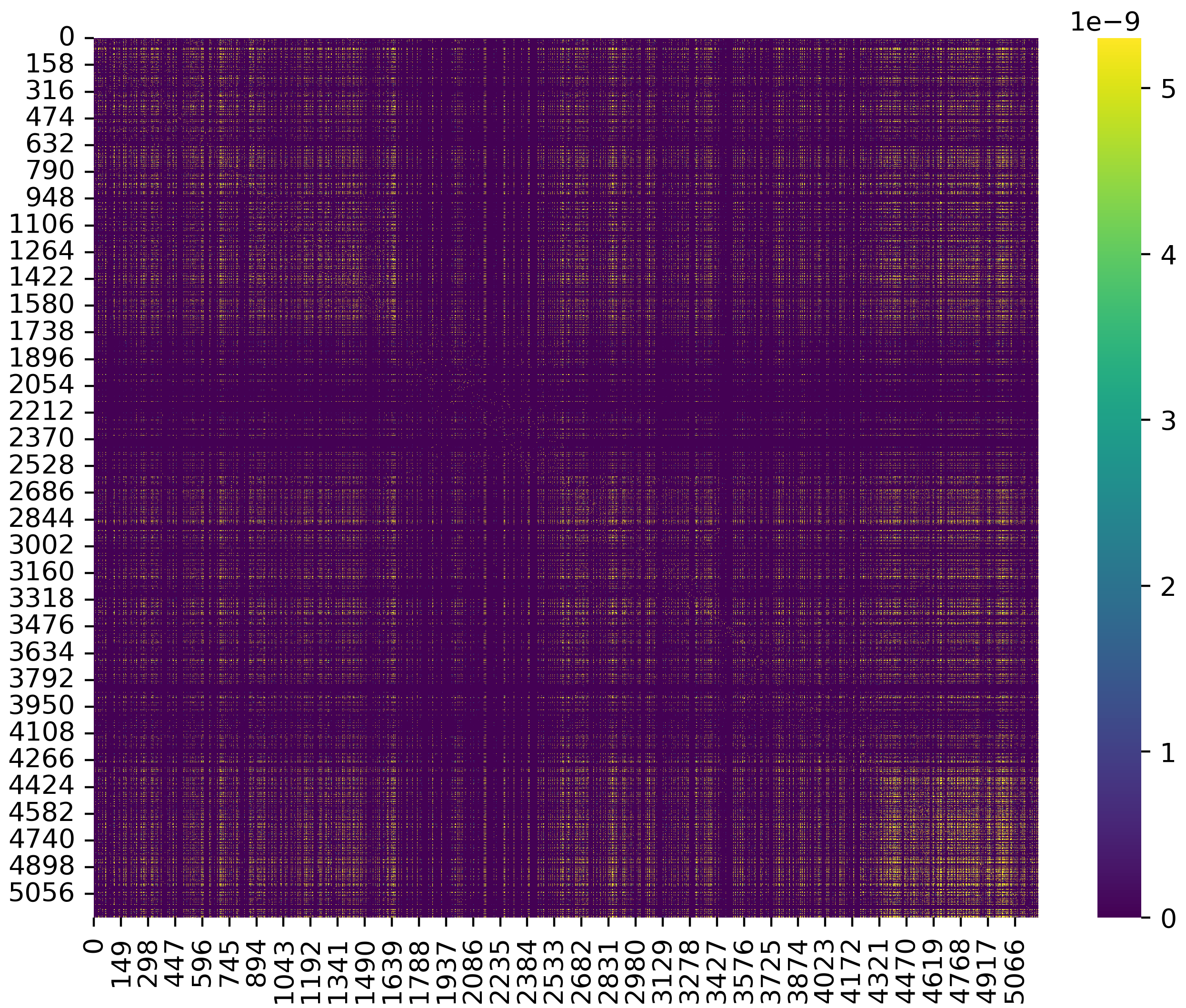}
        \caption{$\mathbf{D}$}
        \label{fig:sub2}
    \end{subfigure}
    \caption{Heatmap visualizations of the kernels $\mathbf{PDP}$ and $\mathbf{D}$ for the BlogCatalog dataset. The rows and columns are sorted by label, with an ideal heatmap showing high-value diagonal blocks for each class. The $\mathbf{PDP}$ heatmap exhibits better class separation compared to $\mathbf{D}$, reflecting improved structure.}
    \label{fig:blogcatalog_heatmaps}
\end{figure}

\subsection{Differences between our graph construction and label-driven diffusion}
Our graph construction method using feature vectors and training labels differs from label-driven diffusion \cite{mendelman2025supervised} in two key ways: first, we apply a three-step diffusion process to ensure symmetry; second, we set inter-class distances to infinity in the label affinity kernel to promote intra-class connections.

\subsection{Approximating homophily using a sampled graph}\label{subsec:approx-homo}
To check if the conditions for enhancing homophily hold, we can approximate $H(\mathcal{G})$ and $H(\mathcal{G}_r)$ using the available training and validation labels (the validation labels are used solely for verifying homophily, not as part of the method). We construct a sampled graph $\mathcal{G}^s$ and sampled reference graph $\mathcal{G}_r^s$, using validation set nodes whose labels are withheld when constructing $\mathbf{P}$ but are known for evaluation. We randomly sample training nodes such that the ratio of unlabeled validation nodes to sampled labeled nodes matches the ratio of unlabeled (validation and test) nodes to labeled nodes in the full graph. The edge set of $\mathcal{G}^s$ consists of edges from $\mathcal{G}$ connecting pairs of nodes present in $\mathcal{G}^s$, and the edge set of $\mathcal{G}_r^s$ consists of edges from $\mathcal{G}_r$ connecting pairs of nodes present in $\mathcal{G}_r^s$. Since the probability of an edge connecting nodes of the same label should remain consistent between the graph and its sampled version, $H(\mathcal{G}^s)$ approximates $H(\mathcal{G})$ and $H(\mathcal{G}_r^s)$ approximates $H(\mathcal{G}_r)$. Thus, $\mathcal{G}^s$ and $\mathcal{G}_r^s$ allow us to check whether the conditions of Proposition \ref{homo-theo-add-1} and/or Proposition \ref{homo-theo-delete-1} hold for $\mathcal{G}_r$, helping determine whether edge addition or deletion will improve the homophily of $\mathcal{G}^{(k)}$.

In Table~\ref{tab:approx-homo}, we report the approximated values on real-world datasets and demonstrate that the approximation closely reflects the true homophily. This validates the suitability of the sampled graphs for analyzing and verifying the assumptions underlying our rewiring framework.

\begin{table}[h]
\caption{Comparison of true and sampled graph homophily values. The sampled graphs $\mathcal{G}^s$ and $\mathcal{G}_r^s$ provide a close approximation to the original homophily values $H(\mathcal{G})$ and $H(\mathcal{G}_r)$, respectively.}
\label{tab:approx-homo}
\centering
\setlength{\tabcolsep}{1mm}
\begin{tabular}{ccccc}
\hline
                     & Cornell & Texas & Wisconsin & BlogCatalog \\ \hline
$H(\mathcal{G})$     & 0.12    & 0.06  & 0.17      & 0.4         \\
$H(\mathcal{G}^s)$   & 0.14    & 0.08  & 0.2       & 0.4         \\ \hline
$H(\mathcal{G}_r)$      & 0.4     & 0.49  & 0.51      & 0.23        \\
$H(\mathcal{G}_r^s)$ & 0.41    & 0.48  & 0.5       & 0.23       
\end{tabular}
\end{table}

\newpage
\section{Experiment settings}\label{sec:appendix-implementation}
All evaluated datasets are available in PyTorch-Geometric. When official data splits with at least five splits were provided, we used the official splits from PyTorch-Geometric. For datasets without official splits or with fewer than five, we randomly partitioned them into five train-validation-test splits (60\%-20\%-20\%). For Chameleon and Squirrel, we used the filtered versions and corresponding splits provided by \cite{platonov2023critical}, as the original versions suffer from train-test data leakage.

For our REFine and all rewiring baselines, we apply our clustering strategy where, for graphs with fewer than 1,000 nodes, we set $c = |\mathcal{V}|$ (no clustering), for graphs with 1,000 to 25,000 nodes, we set $c = 500$, and for graphs with more than 25,000 nodes, we set $c = 100$.

We perform a grid search to optimize hyperparameters on the validation set and report test accuracy with standard error of the mean (SEM). The hidden dimension is set to $32$, the learning rate is selected from $\{10^{-4}, 10^{-3}, 10^{-2}, 10^{-1}\}$, and weight decay is searched within $\{10^{-4}, 10^{-3}, 10^{-2}\}$. All models are trained using the Adam optimizer.

For all architectures, we used ReLU activation. Specifically, GCN consists of 2 GCN layers, GATv2 comprises 2 GATv2 layers, and APPNP includes 2 linear layers followed by an APPNP propagation layer with $K = 10$ and $\alpha = 0.1$.

MixHop consists of two MixHopConv layers, each using the default powers $[0, 1, 2]$, followed by a linear output layer. To accommodate the concatenation of three feature sets per layer, the hidden dimension is divided by 3. H$_2$GCN includes a linear feature embedding layer followed by two H$_2$GCNConv layers. Since each H$_2$GCNConv layer concatenates two embeddings, the hidden dimension is divided by 2. The final output layer is a linear projection applied to the concatenated features across all layers. GPRGNN consists of a 2-layer MLP followed by a GPR propagation module with the default $K = 10$, $\alpha = 0.1$, and initialization set to Random. OrderedGNN consists of an input linear transformation layer followed by two OrderedConv layers, each using a temporal matching module composed of a linear layer and layer normalization. The hidden dimension is divided into four chunks to enable chunk-wise updates. A final linear output layer is applied after the OrderedConv blocks.

We transform all directed datasets into undirected ones before applying rewiring and training, as METIS operates only on undirected graphs.

All experiments were conducted using Python on NVIDIA DGX A100 systems, each equipped with A100 GPUs and 512 GB of RAM, with T/0 reached after 24 hours of execution.

\textbf{REFine hyperparameters.} 
For the scale parameter $\epsilon$, we search for the optimal value in $\{1e-8, 1e-7, 1e-6, 1e-5, 1e-4, 1e-3, 1e-2, 1e0, 1e+1, 1e+2\}$. We select the best option between using both data and training labels ($\mathbf{\Gamma} = \mathbf{PDP}$) and using only the data ($\mathbf{\Gamma} = \mathbf{D}$). The label kernel $\mathbf{P}$ is constructed using only the training labels from each data split. Additionally, we choose between edge addition and edge deletion.

For $|k|$, the number of added or deleted edges, we search for the optimal value in $\{0.1m, 0.3m, 0.5m, 0.7m, 0.9m, m\}$, where $m = |\mathcal{E}_r|$ (the number of edges in the reference graph) when adding edges, or $m = |\mathcal{E} \cap \mathcal{E}_r^c|$ (the number of common edges between the original graph and the complement of the reference graph) when deleting edges.

\textbf{SDRF hyperparameters.} For each hyperparameter, we search within the range reported in the original paper. Specifically, for the removal bound ($C^{+}$), we search for the optimal value in $\{0.5, 1, 10, 20, 40\}$; for the $\tau$ parameter, in $\{50, 100, 200\}$; and for the maximum number of iterations, we define $n$ as the number of nodes in each dataset or the cluster size when using our clustering adaptation for large datasets, and search for the optimal value in $\{0.1n, 0.3n, 0.5n, 0.7n, 0.9n, n\}$.

\textbf{FoSR hyperparameters.} We search for the optimal value of the maximum number of iterations in $\{50, 100, 150, 200, 300\}$, based on the range reported in the original paper.

\textbf{BORF hyperparameters.} For each hyperparameter, we search within the range reported in the original paper. Specifically, for the number of added edges ($h$), we search for the optimal value in $\{20, 30\}$; for the number of deleted edges ($k$), in $\{10, 20, 30\}$; and for the number of batches ($n$), in $\{2, 3\}$.

\newpage
\section{Additional results and full tables}\label{sec:more-results}
\subsection{Complete results}\label{subsec:complete-res}
\begin{table}[h]
\caption{Complete results with standard error of the mean (SEM) for datasets containing up to 5,500 nodes. "T/O" indicates a timeout and "OOM" indicates out of memory. Best results are bolded.}
\label{results-small}
\centering
\setlength{\tabcolsep}{1mm}
\begin{tabular}{ccccccc}
\hline
                 & Cornell                 & Texas                   & Wisconsin               & Chameleon               & Squirrel                & BlogCatalog             \\
Nodes            & 183                     & 183                     & 251                     & 851                     & 2223                    & 5196                    \\
$H(\mathcal{G})$ & 0.12                    & 0.06                    & 0.17                    & 0.23                    & 0.2                     & 0.4                     \\ \hline
\multicolumn{7}{c}{GCN}                                                                                                                                                      \\ \hline
None             & $51.8 \pm 1.3$          & $59.7 \pm 2.6$          & $57.2 \pm 1.9$          & $41.3 \pm 0.6$          & $40.7 \pm 0.4$          & $77.6 \pm 0.8$          \\
SDRF             & $58.4 \pm 2.2$          & $65.4 \pm 1.8$          & $68.6 \pm 0.8$          & $40.6 \pm 1.0$          & $\mathbf{41.5 \pm 0.6}$ & $77.9 \pm 0.7$          \\
FoSR             & $51.6 \pm 2.5$          & $62.4 \pm 1.9$          & $60.5 \pm 1.3$          & $43.1 \pm 1.1$          & $39.7 \pm 0.6$          & $77.4 \pm 0.6$          \\
BORF             & $53 \pm 2.7$            & $62.1 \pm 1.8$          & $56.3 \pm 2.3$          & $41.6 \pm 1$            & $40.3 \pm 0.6$          & $78 \pm 0.5$            \\
REFine             & $\mathbf{71.3 \pm 1.5}$ & $\mathbf{79.1 \pm 1.6}$ & $\mathbf{82.5 \pm 1.6}$ & $\mathbf{44.1 \pm 1.1}$ & $41.1 \pm 0.7$          & $\mathbf{85.2 \pm 0.3}$ \\ \hline
\multicolumn{7}{c}{GATv2}                                                                                                                                                    \\ \hline
None             & $43.7 \pm 2.8$          & $53.2 \pm 3.2$          & $53.3 \pm 1.8$          & $40.8 \pm 0.8$          & $37.4 \pm 0.6$          & $80.3 \pm 0.6$          \\
SDRF             & $51 \pm 2.4$            & $61.8 \pm 1.3$          & $63.3 \pm 1.3$          & $39.5 \pm 1.4$          & $37.7 \pm 0.6$          & $83.3 \pm 0.9$          \\
FoSR             & $46 \pm 2.2$            & $59.7 \pm 1.6$          & $60.9 \pm 1.8$          & $40.1 \pm 0.6$          & $37.7 \pm 0.4$          & $81.6 \pm 1.4$          \\
BORF             & $44.6 \pm 2.3$          & $55.1 \pm 2.5$          & $52.5 \pm 2.1$          & $41.2 \pm 1.2$          & $36.7 \pm 0.6$          & $82.2 \pm 1.4$          \\
REFine             & $\mathbf{74 \pm 2}$     & $\mathbf{82.4 \pm 2.1}$ & $\mathbf{84.9 \pm 1.3}$ & $\mathbf{43.5 \pm 1.8}$ & $\mathbf{38.8 \pm 0.5}$ & $\mathbf{85.9 \pm 1.3}$ \\ \hline
\multicolumn{7}{c}{APPNP}                                                                                                                                                    \\ \hline
None             & $49.4 \pm 1.7$          & $61.9 \pm 2$            & $62.1 \pm 1.3$          & $40.2 \pm 1.1$          & $35.4 \pm 0.7$          & $95.7 \pm 0.3$          \\
SDRF             & $63.7 \pm 2.1$          & $77 \pm 1.4$            & $75 \pm 0.9$            & $41 \pm 1.1$            & $35.6 \pm 0.7$          & $95.8 \pm 0.2$          \\
FoSR             & $55.1 \pm 1.5$          & $67 \pm 1.4$            & $68.4 \pm 1.8$          & $41.8 \pm 1$            & $35.7 \pm 0.6$          & $\mathbf{95.9 \pm 0.2}$ \\
BORF             & $55.1 \pm 2$            & $65.1 \pm 2.4$          & $66 \pm 1.6$            & $39.6 \pm 0.8$          & $36.2 \pm 0.4$          & $95.5 \pm 0.2$          \\
REFine             & $\mathbf{74.6 \pm 1.5}$ & $\mathbf{82.4 \pm 1.7}$ & $\mathbf{86 \pm 1.4}$   & $\mathbf{44.5 \pm 1.2}$ & $\mathbf{38.8 \pm 0.8}$ & $95.7 \pm 0.2$          \\ \hline
\end{tabular}
\end{table}

\begin{table}[h]
\caption{Complete results with standard error of the mean (SEM) for datasets with more than 5,500 nodes. "T/O" indicates a timeout and "OOM" indicates out of memory. Best results are bolded.}
\label{results-large}
\centering
\setlength{\tabcolsep}{1mm}
\begin{tabular}{cccccc}
\hline
                 & Actor                   & BGP                     & Roman-empire            & EllipticBitcoin          & Genius                   \\
Nodes            & 7600                    & 10k                     & 22k                     & 203k                     & 421k                     \\
$H(\mathcal{G})$ & 0.21                    & 0.28                    & 0.04                    & 0.71                     & 0.59                     \\ \hline
\multicolumn{6}{c}{GCN}                                                                                                                              \\ \hline
None             & $28.4 \pm 0.2$          & $53.4 \pm 0.5$          & $37 \pm 0.3$            & $87.1 \pm 0.05$          & $83.1 \pm 0.07$          \\
SDRF             & $29.2 \pm 0.3$          & $53.9 \pm 0.5$          & $46.2 \pm 0.2$          & $87 \pm 0.04$            & OOM                      \\
FoSR             & $28.1 \pm 0.2$          & $53.3 \pm 0.4$          & $36.9 \pm 0.4$          & $85.9 \pm 0.05$          & $82.2 \pm 0.05$          \\
BORF             & $28.3 \pm 0.3$          & $52 \pm 0.8$            & $35.2 \pm 0.3$          & T/O                      & T/O                      \\
REFine             & $\mathbf{31.3 \pm 0.4}$ & $\mathbf{59.3 \pm 0.2}$ & $\mathbf{58.8 \pm 0.2}$ & $\mathbf{89.5 \pm 0.08}$ & $\mathbf{83.8 \pm 0.04}$ \\ \hline
\multicolumn{6}{c}{GATv2}                                                                                                                            \\ \hline
None             & $29.6 \pm 0.4$          & $62.3 \pm 0.3$          & $14.8 \pm 0.4$          & $89.6 \pm 0.4$           & $81.7 \pm 0.1$           \\
SDRF             & $29.7 \pm 0.3$          & $63.2 \pm 0.2$          & $20.8 \pm 0.2$          & $90.6 \pm 0.1$           & OOM                      \\
FoSR             & $29.2 \pm 0.5$          & $62.8 \pm 0.4$          & $14.7 \pm 0.4$          & $89.9 \pm 0.1$           & $81.2 \pm 0.06$          \\
BORF             & $28.6 \pm 0.5$          & $63 \pm 0.3$            & $14.9 \pm 0.4$          & T/O                      & T/O                      \\
REFine             & $\mathbf{35.1 \pm 0.3}$ & $\mathbf{63.3 \pm 0.3}$ & $\mathbf{28.5 \pm 0.7}$ & $\mathbf{90.8 \pm 0.05}$ & $\mathbf{83.6 \pm 0.03}$ \\ \hline
\multicolumn{6}{c}{APPNP}                                                                                                                            \\ \hline
None             & $33.8 \pm 0.2$          & $63.6 \pm 0.5$          & $14 \pm 0.07$           & $87.4 \pm 0.05$          & $81.9 \pm 0.4$           \\
SDRF             & $33.8 \pm 0.2$          & $63.6 \pm 0.3$          & $22.6 \pm 0.06$         & $87.4 \pm 0.03$          & OOM                      \\
FoSR             & $33.9 \pm 0.2$          & $63.6 \pm 0.5$          & $14.3 \pm 0.4$          & $86.7 \pm 0.05$          & $81.2 \pm 0.4$           \\
BORF             & $33.6 \pm 0.3$          & $63.4 \pm 0.3$          & $15.5 \pm 0.6$          & T/O                      & T/O                      \\
REFine             & $\mathbf{34.8 \pm 0.3}$ & $\mathbf{64.3 \pm 0.3}$ & $\mathbf{30.8 \pm 0.5}$ & $\mathbf{89.8 \pm 0.09}$ & $\mathbf{83.6 \pm 0.04}$ \\ \hline
\end{tabular}
\end{table}

\begin{table}[h]
\caption{Complete GCN results with standard error of the mean (SEM) for high-homophily datasets. Best results are bolded.}
\label{results-homophily}
\centering
\setlength{\tabcolsep}{1mm}
\begin{tabular}{cccc}
\hline
                 & Cora                    & Citeseer                & Pubmed         \\
Nodes              & 2708                    & 3327                    & 19K            \\
$H(\mathcal{G})$ & 0.8                     & 0.73                    & 0.8            \\ \hline
None             & $\mathbf{87.6 \pm 0.5}$ & $\mathbf{77.3 \pm 0.3}$ & $88.2 \pm 0.2$ \\
SDRF             & $\mathbf{87.6 \pm 0.5}$ & $77.2 \pm 0.5$          & $\mathbf{88.3 \pm 0.2}$ \\
FoSR             & $87 \pm 0.6$            & $76.7 \pm 0.4$          & $88.2 \pm 0.2$           \\
BORF             & $86.6 \pm 0.6$          & $76.6 \pm 0.4$          & $87.3 \pm 0.2$ \\
REFine             & $87.4 \pm 0.4$          & $77.2 \pm 0.4$          & $\mathbf{88.3 \pm 0.2}$ \\ \hline
\end{tabular}
\end{table}

\begin{table}[h]
\captionof{table}{Complete results with standard error of the mean (SEM) on heterophilic graphs for specialized GNNs vs. ST+REFine. "OOM" indicates out-of-memory. The best results are bolded, and the second-best are underlined.}
\label{tab:spec-vs-traditional-full}
\centering
\scriptsize
\tiny
\setlength{\tabcolsep}{1mm}
\begin{tabular}{cccccccccc}
\hline
           & Cornell                    & Texas                      & Wisconsin                  & Chameleon                  & Squirrel                   & BlogCatalog                & Actor                      & BGP                        & Roman-empire               \\ \hline
MixHop     & $71.9 \pm 1.5$             & $79.1 \pm 1.7$             & \underline{$83.1 \pm 1.7$} & \underline{$43.2 \pm 1.3$} & $39.8 \pm 0.7$             & OOM                        & $\mathbf{36.2 \pm 0.3}$    & $64.3 \pm 0.2$             & $32.1 \pm 1.5$             \\
H$_2$GCN   & \underline{$73.2 \pm 1.8$} & $\mathbf{82.7 \pm 2}$      & $82.3 \pm 1.6$             & $41.8 \pm 1$               & \underline{$40.4 \pm 0.4$} & $\mathbf{96.4 \pm 0.1}$    & $30.3 \pm 0.5$             & \underline{$64.9 \pm 0.5$} & $34.3 \pm 1.7$             \\
GPRGNN     & $70.8 \pm 1.2$             & $81 \pm 1.7$               & $82.5 \pm 1$               & $40.9 \pm 1$               & $38.5 \pm 0.8$             & \underline{$95.7 \pm 0.1$} & $35.4 \pm 0.3$             & $\mathbf{65 \pm 0.5}$      & $20.5 \pm 3.6$             \\
OrderedGNN & $70.8 \pm 2.1$             & $77.8 \pm 1.4$             & $82.1 \pm 1.1$             & $38 \pm 1.1$               & $34.3 \pm 0.7$             & \underline{$95.7 \pm 0.2$} & \underline{$35.8 \pm 0.3$} & $\mathbf{65 \pm 0.1}$      & \underline{$45.5 \pm 0.6$} \\
ST+REFine  & $\mathbf{74.6 \pm 1.5}$    & \underline{$82.4 \pm 1.7$} & $\mathbf{86 \pm 1.4}$      & $\mathbf{44.5 \pm 1.2}$    & $\mathbf{41.1 \pm 0.7}$    & \underline{$95.7 \pm 0.2$} & $35.1 \pm 0.3$             & $64.3 \pm 0.3$             & $\mathbf{58.8 \pm 0.2}$    \\ \hline
\end{tabular}
\end{table}

\newpage
\subsection{Impact of rewiring on edge homophily}\label{subsec:homophily-diagram}
\begin{figure}[h]
    \centering
        \begin{subfigure}[t]{0.49\textwidth}
            \centering
            \includegraphics[width=\textwidth]{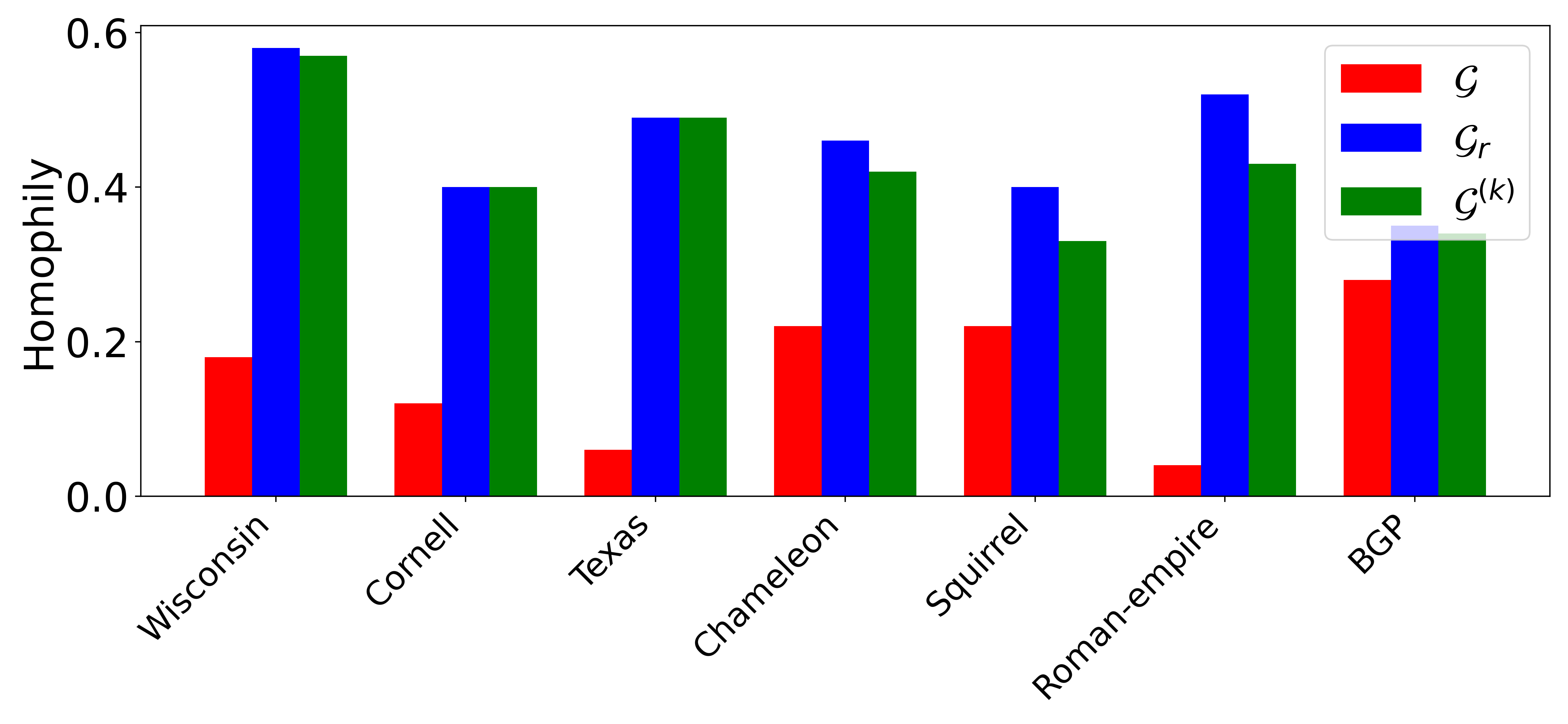}
            \caption{Edge addition}
            \label{fig:sub1}
        \end{subfigure}
        \vspace{1mm}
        \begin{subfigure}[t]{0.49\textwidth}
            \centering
            \includegraphics[width=\textwidth]{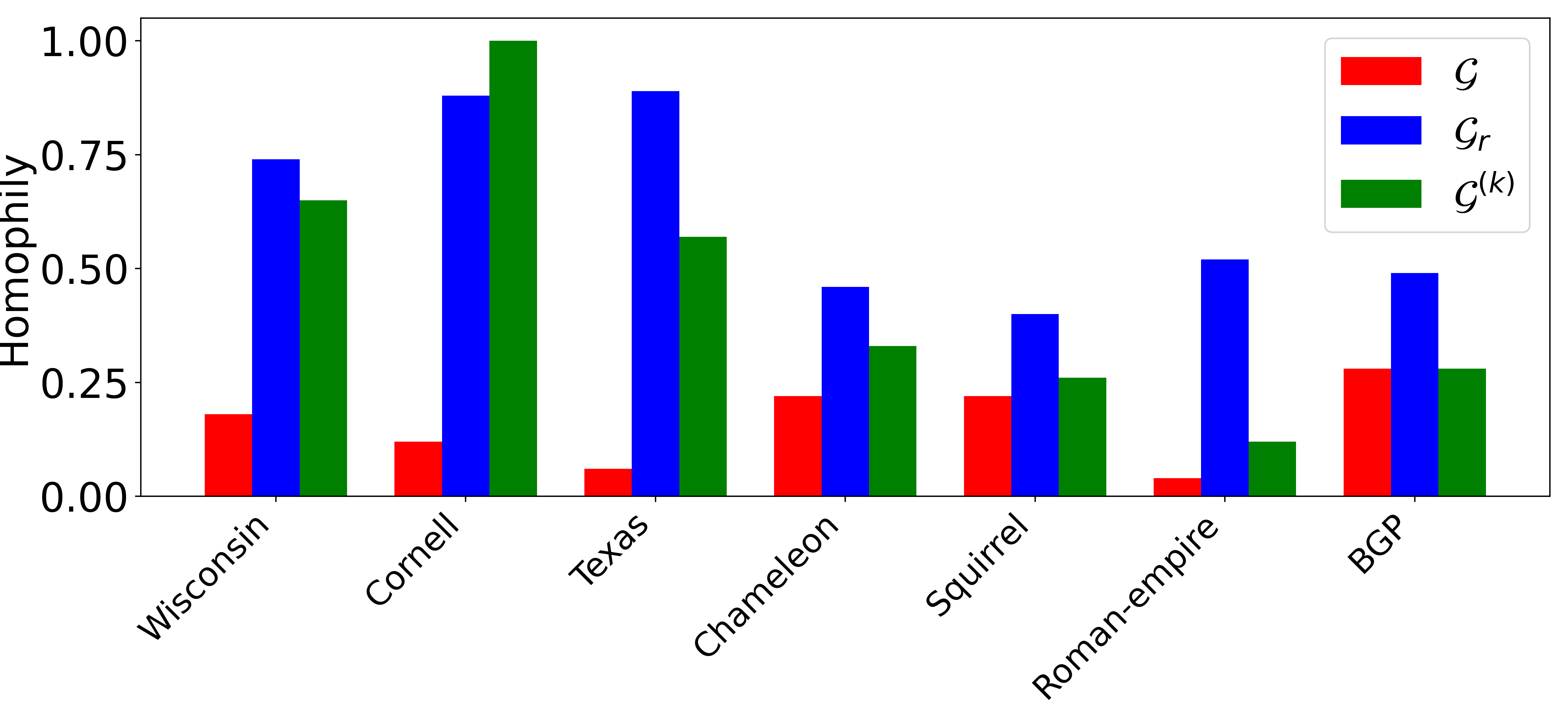}
            \caption{Edge deletion}
            \label{fig:sub2}
        \end{subfigure}
        \caption{Edge homophily of the original graph $\mathcal{G}$, the reference graph $\mathcal{G}_r$ used for rewiring, and the rewired graph $\mathcal{G}^{(k)}$.}
        \label{fig:homophily-bar-chart}
\end{figure}

\subsection{Reference graph homophily: features vs.\ label-driven diffusion}\label{subsec:homophily-comparison-D-PDP}

Table~\ref{tab:homophily-of-D-and-PDP} compares the homophily of the reference graph $H(\mathcal{G}_r)$ when constructed using only node features ($\mathbf{\Gamma = D}$), using label-driven diffusion that incorporates both features and training labels ($\mathbf{\Gamma = PDP}$), and the baseline homophily of the original graph $H(\mathcal{G})$. As shown, both reference graphs exhibit consistently higher homophily than the original graph. Moreover, incorporating label information ($\mathbf{\Gamma = PDP}$) further improves homophily in most cases compared to using features alone ($\mathbf{\Gamma = D}$).

\begin{table}[h]
\captionof{table}{Homophily of the reference graph $\mathcal{G}_r$ constructed using only node features ($\mathbf{\Gamma = D}$), using label-driven diffusion ($\mathbf{\Gamma = PDP}$), and the original graph $\mathcal{G}$.}
\label{tab:homophily-of-D-and-PDP}
\centering
\scriptsize
\setlength{\tabcolsep}{1mm}
\begin{tabular}{cccccccccc}
\hline
                                    & Cornell         & Texas          & Wisconsin      & Chameleon       & Squirrel        & BlogCatalog     & Actor           & BGP            & Roman-empire    \\ \hline
$H(\mathcal{G})$                     & $0.12$          & $0.06$         & $0.17$         & $0.23$          & $0.2$           & $0.4$           & $0.21$          & $0.28$         & $0.04$          \\ \hline
$H(\mathcal{G}_r)$ $\mathbf{(D)}$   & $0.47$          & $0.55$         & $0.58$         & $\mathbf{0.28}$ & $\mathbf{0.29}$ & $0.4$           & $0.24$          & $0.4$          & $\mathbf{0.52}$ \\ \hline
$H(\mathcal{G}_r)$ $\mathbf{(PDP)}$ & $\mathbf{0.66}$ & $\mathbf{0.7}$ & $\mathbf{0.7}$ & $0.25$          & $0.25$          & $\mathbf{0.65}$ & $\mathbf{0.27}$ & $\mathbf{0.6}$ & $0.44$          \\ \hline
\end{tabular}
\end{table}

\newpage
\section{Complexity and runtime}\label{sec:complexity-analysis}
The computation of the affinity kernels $\mathbf{D}$ and $\mathbf{P}$ has a time complexity of $O(d \cdot c^2)$, where $d$ is the dimension of the node feature vectors and $c$ is the cluster size, which we set to 100 or 500 in our experiments. The multiplication of the kernels to obtain $\mathbf{\Gamma}$ incurs a complexity of $O(c^3)$. The total number of clusters is given by $ \frac{n}{c} $, where $ n $ denotes the total number of nodes in the graph. Since the computational complexity per cluster is $ O(c^3) $, the overall complexity is $ O(c^3 \cdot \frac{n}{c}) = O(c^2 \cdot n) $. Notably, when $ n \gg c $, the complexity simplifies to $ O(n) $, indicating that the method remains linear in the number of nodes.

\subsection{Parallel implementation on GPU}
The bottleneck in our method is matrix multiplication. Since METIS partitions the graph into clusters of approximately equal size, it enables us to implement the matrix multiplication in parallel on the GPU. Thus, given $g$ units of GPU, the overall complexity is $ O(c^3 \cdot \frac{n}{c} \cdot \frac{1}{g}) = O(\frac{c^2}{g} \cdot n) $, making our method even more efficient in practice compared to other approaches.

\subsection{Runtime comparisons}
In Table \ref{tab:runtimes}, we compare the runtimes of our REFine method with baseline approaches across three datasets. Due to the high computational cost of the baselines on large datasets, we adapt them to use the same clustering strategy as REFine (described in Section \ref{sec:method}). For small datasets (with fewer than 1000 nodes), we use the original implementations. For larger datasets (with 1000 or more nodes), we apply our clustering-based adaptations, using a cluster size of 500 for BlogCatalog and Roman-empire. For the runtime comparison, we use default hyperparameters for all methods. Specifically, for SDRF, we set $C^{+} = 0.5$, $\tau = 100$, and the maximum number of iterations to $0.2n$, where $n$ is the number of nodes in the dataset (or the cluster size when applying the clustering adaptation). For FoSR, the maximum number of iterations is 50 for each cluster. For BORF, we set $h = 30$, $k = 20$, and $n = 3$.

\begin{table}[h]
\caption{Runtime comparison across different methods. The table shows the runtimes (in seconds) for SDRF, FoSR, BORF, and our REFine method on three datasets: Wisconsin, BlogCatalog, and Roman-empire. $n$ represents the number of nodes and $m$ represents the number of edges. The smallest runtime for each dataset is highlighted in bold.}
\label{tab:runtimes}
\centering
\setlength{\tabcolsep}{1mm}
\begin{tabular}{lcccccc}
\hline
\multicolumn{3}{c}{Dataset}    & {SDRF} & {FoSR} & {BORF} & {REFine~(ours)} \\
Name         & $n$    & $m$    &                       &                       &                       &                       \\ \hline
Wisconsin    & $251$  & $900$  & $0.8$                 & $4.7$                 & $3.8$                 & $\mathbf{0.1}$        \\
BlogCatalog  & $5196$ & $343k$ & $29$                  & $5.7$                 & $180$                 & $\mathbf{3.3}$        \\
Roman-empire & $22k$  & $65k$  & $20$                  & $5.8$                 & $380$                 & $\mathbf{4.2}$       
\end{tabular}
\end{table}

\newpage
\section{Ablation studies \& analysis}\label{sec:apendix-ablation}
\subsection{Homophily and rewiring effectiveness}\label{subsec:apendix-homophily-effectiveness}
\begin{figure}[h]
    \centering
    \includegraphics[width=0.5\textwidth]{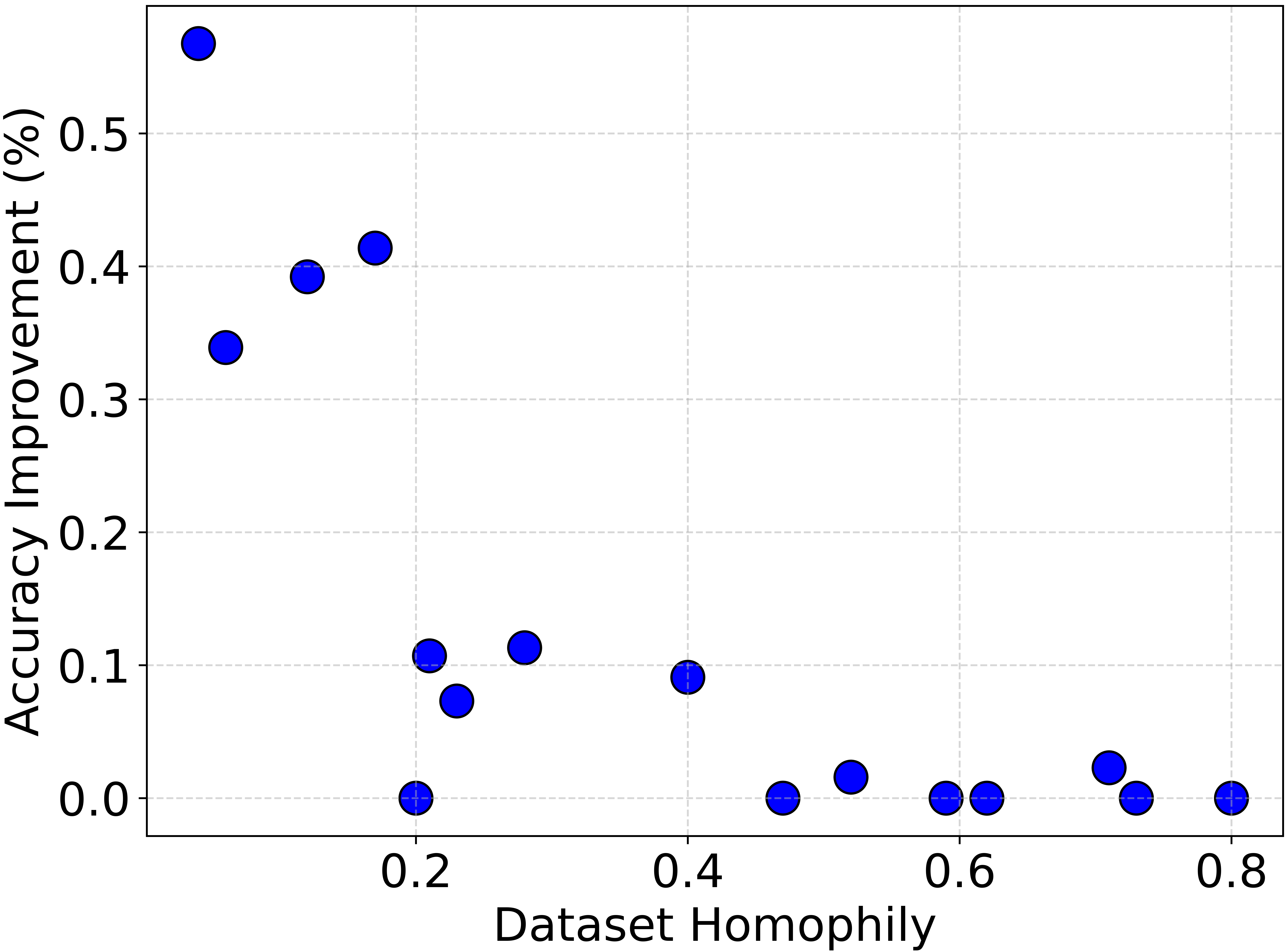}
    \caption{Test accuracy improvement across all evaluated datasets.}
    \label{fig:homophily-vs-test-accuracy-improv}
\end{figure}

\subsection{Cluster size}\label{subsec:apendix-cluster-size}

In our experiments, we set the default cluster size to 500 for datasets with $1000 < n \leq 25000$ and 100 for datasets with $n > 25000$, primarily to optimize runtime. Table \ref{tab:cluster-size} presents the effect of varying the cluster size $\{100, 500, 1000, 2000\}$ when applying REFine. The 'Improvement' column indicates whether changing the default cluster size leads to a statistically significant improvement. As shown, while increasing the cluster size beyond 500 can sometimes yield improvements, the overall effect remains relatively minor.

\begin{table}[h]
\caption{Effect of varying the cluster size on REFine performance across different datasets. We report the mean test accuracy with the standard error of the mean (SEM) for different architectures (GCN, GATv2, APPNP). The 'Improvement' column indicates whether increasing the cluster size results in a statistically significant improvement (V) or not (X).}
\label{tab:cluster-size}
\centering
\small
\setlength{\tabcolsep}{1mm}
\begin{tabular}{lcccccccc}
\hline
Dataset         & $n$  & $H(\mathcal{G})$ & Arch  & 100             & 500             & 1000            & 2000            & Improvement \\ \hline
                &      &                  & GCN   & $40.5 \pm 0.8$  & $41.1 \pm 0.7$  & $40.7 \pm 0.6$  & N/A             & X           \\
Squirrel        & 2223 & 0.2              & GATv2 & $37.4 \pm 0.6$  & $38.8 \pm 0.5$  & $38.9 \pm 0.5$  & N/A             & X           \\
                &      &                  & APPNP & $37.2 \pm 0.6$  & $38.8 \pm 0.8$  & $38.8 \pm 0.7$  & N/A             & X           \\ \hline
                &      &                  & GCN   & $80.9 \pm 0.5$  & $85.2 \pm 0.3$  & $86.2 \pm 0.5$  & $86.4 \pm 0.3$  & V           \\
BlogCatalog     & 5196 & 0.4              & GATv2 & $80.7 \pm 0.7$  & $85.9 \pm 1.3$  & $83.1 \pm 1.2$  & $82.4 \pm 2.1$  & X           \\
                &      &                  & APPNP & $95.9 \pm 0.3$  & $95.7 \pm 0.2$  & $95.3 \pm 0.2$  & $94.8 \pm 0.2$  & X           \\ \hline
                &      &                  & GCN   & $57 \pm 0.2$    & $58.8 \pm 0.2$  & $59.5 \pm 0.2$  & $59.7 \pm 0.2$  & V           \\
Roman-empire    & 22k  & 0.04             & GATv2 & $27.4 \pm 0.8$  & $28.5 \pm 0.7$  & $28.5 \pm 1.1$  & $29.7 \pm 2$    & X           \\
                &      &                  & APPNP & $29.7 \pm 0.7$  & $30.8 \pm 0.5$  & $29.2 \pm 0.8$  & $30.7 \pm 0.5$  & X           \\ \hline
                &      &                  & GCN   & $89.5 \pm 0.08$ & $89.7 \pm 0.07$ & $89.8 \pm 0.07$ & $90 \pm 0.06$   & V           \\
EllipticBitcoin & 203k & 0.71             & GATv2 & $90.8 \pm 0.05$ & $90.8 \pm 0.07$ & $90.7 \pm 0.15$ & $90.7 \pm 0.1$  & X           \\
                &      &                  & APPNP & $89.8 \pm 0.09$ & $90.1 \pm 0.05$ & $90.3 \pm 0.03$ & $90.3 \pm 0.09$ & V           \\ \hline
\end{tabular}
\end{table}

\end{document}